\newcommand{\Dist}{\mathcal{D}}
\newcommand{\Xs}{\mathcal{X}}
\DeclareMathOperator*{\sign}{sign}
\newcommand{\Loss}{\mathcal{L}}
\newcommand{\Hyp}{\mathbb{C}}
\newcommand{\Buc}{\mathcal{C}}
\newcommand{\rU}{\textbf{U}}
\newcommand{\rV}{\textbf{V}}
\newcommand{\rInto}{\textbf{R}}
\newcommand{\Fam}{\mathcal{F}}
\newcommand{\E}{\mathbb{E}}
\newcommand{\Tree}{\mathcal{T}}
\newcommand{\Ms}{\mathcal{M}}
\newcommand{\HanI}{\mathcal{I}}
\newcommand{\rInd}{\textbf{I}}
\newcommand{\eps}{\varepsilon}
\newcommand{\rS}{\textbf{S}}
\newcommand{\rx}{\textbf{x}}
\newcommand{\Alg}{\mathcal{A}}
\newcommand{\Distinct}{D}
\newcommand{\ri}{\textbf{i}}
\newcommand{\Sub}{\mathcal{W}}
\newcommand{\sSet}{\mathcal{E}}
\newcommand{\rBuc}{\textbf{C}}
\newcommand{\Pdist}{\mathcal{P}}
\newcommand{\Boots}{\textbf{B}}
\newtheorem{theorem}{Theorem}
\newtheorem{lemma}{Lemma}
\newtheorem{corollary}{Corollary}
\title{Bagging is an Optimal PAC Learner}
\author{Kasper Green Larsen\thanks{\texttt{larsen@cs.au.dk}, Supported by Independent Research Fund Denmark (DFF) Sapere Aude Research Leader grant No 9064-00068B.}\\Aarhus University}
\begin{document}

\date{}
\maketitle

\begin{abstract}
  Determining the optimal sample complexity of PAC learning in the realizable setting was a central open problem in learning theory for decades. Finally, the seminal work by Hanneke (2016) gave an algorithm with a provably optimal sample complexity. His algorithm is based on a careful and structured sub-sampling of the training data and then returning a majority vote among hypotheses trained on each of the sub-samples. While being a very exciting theoretical result, it has not had much impact in practice, in part due to inefficiency, since it constructs a polynomial number of sub-samples of the training data, each of linear size.

  In this work, we prove the surprising result that the practical and classic heuristic \emph{bagging} (a.k.a. bootstrap aggregation), due to Breiman (1996), is in fact also an optimal PAC learner. Bagging pre-dates Hanneke's algorithm by twenty years and is taught in most undergraduate machine learning courses. Moreover, we show that it only requires a logarithmic number of sub-samples to reach optimality.
\end{abstract}

\section{Introduction}
\label{sec:intro}
PAC learning, or probably approximately correct learning (Valiant~\cite{valiant1984theory}), is the most
classic theoretical model for studying classification problems in
supervised learning. For binary classification in the \emph{realizable} setting, the goal is to design a learning algorithm
that with probability $1-\delta$ over a random training data set, outputs a
hypothesis that mispredicts the label of a new random sample with probability
at most $\eps$. More formally, one assumes that samples come from an input
domain $\Xs$ and that there is an unknown \emph{concept} $c : \Xs \to
\{-1,1\}$ that we are trying to learn. The realizable setting means that
$c$ belongs to a predefined concept class $\Hyp \subseteq \Xs \to \{-1,1\}$ and that the
correct label of any $x \in \Xs$ is always $c(x)$. 

For the above learning task, a learning
algorithm $\Alg$ receives a training data set $\rS$ of $m$ i.i.d. samples
$(\rx_1,c(\rx_1)),\dots,(\rx_m,c(\rx_m))$ where each $\rx_i$ is drawn
independently from an \emph{unknown} data distribution
$\Dist$ over $\Xs$. From this data set, the learning algorithm
must output a hypothesis $h_\rS : \Xs \to \{-1,1\}$. The algorithm $\Alg$ is a
PAC learner, if for any distribution $\Dist$ and any concept $c \in \Hyp$, it holds that if
$\Alg$ is given enough i.i.d. training samples $\rS$, then with
probability at least $1-\delta$, the hypothesis $h_\rS$
that it outputs satisfies $\Loss_\Dist(h_\rS) = \Pr_{\rx \sim \Dist}[h(\rx) \neq c(\rx)] \leq
\eps$. We remark that the algorithm $\Alg$ knows the
concept class $\Hyp$, but not the
data distribution $\Dist$. Determining the minimum number of samples
$\Ms(\eps,\delta)$, as a function of $\eps$, $\delta$ and the
VC-dimension~\cite{vcdim} $d$ of $\Hyp$ (see Section~\ref{sec:overview} for
a formal definition of VC-dimension) needed for
this learning task, is one of the fundamental problems in PAC learning.

The most natural learning algorithm for the above is \emph{empirical
  risk minimization} (ERM). Here a learning algorithm simply outputs
an arbitrary hypothesis/concept $h_\rS \in \Hyp$ that correctly predicts the
labels of the training data, i.e. it has $h_\rS(\rx_i) = c(\rx_i)$ for all
$(\rx_i,c(\rx_i)) \in \rS$. Clearly such a hypothesis exists since $c \in
\Hyp$. Such a learning algorithm is referred to as a \emph{proper}
learner as it outputs a hypothesis/concept from the concept class $\Hyp$. ERM is known to obtain a
sample complexity of
$O(\eps^{-1}(d \lg(1/\eps) + \lg(1/\delta)))$~\cite{vapnik:estimation,blumer1989learnability}. Moreover, it
can be shown that this analysis cannot be tightened, i.e. there are
distributions $\Dist$ and concept classes $\Hyp$ where any proper
learner needs $\Omega(\eps^{-1}(d \lg(1/\eps) +
\lg(1/\delta)))$ samples~\cite{properLB}. However, a PAC learning algorithm is not necessarily
required to output a hypothesis $h \in \Hyp$. That better strategies
might exist may seem counter-intuitive at first, since we are
promised that the unknown concept $c$ lies in $\Hyp$. Nonetheless, the
strongest known lower bounds for arbitrary PAC learning algorithms only
show that $\Omega(\eps^{-1}(d + \lg(1/\delta)))$ samples are
necessary~\cite{blumer1989learnability,ehrenfeucht1989general}. This leaves a gap of a factor $\lg(1/\eps)$ between ERM and
the lower bound for arbitrary algorithms.

Despite its centrality, closing this gap
remained a big open problem for more than thirty years. Finally, in
2016, Hanneke~\cite{hanneke2016optimal} built on ideas by Simon~\cite{simon2015almost} and presented an algorithm with an asymptotically
optimal sample complexity of
$\Ms(\eps,\delta) = O(\eps^{-1}(d + \lg(1/\delta)))$. His
algorithm is based on constructing a number of subsets
$\rS_i \subset \rS$ of the training data $\rS$ with carefully designed
overlaps between the $\rS_i$'s (see Section~\ref{sec:overview}). He
then runs ERM on each $\rS_i$ to obtain hypotheses $h_{\rS_i} \in \Hyp$ and
finally outputs the hypothesis $f_\rS$ taking the majority vote $f_\rS(x) =
\sign(\sum_i h_{\rS_i}(x))$ among the $h_{\rS_i}$'s.

While being a major theoretical breakthrough, Hanneke's algorithm has
unfortunately not had any significant practical impact. One
explanation is that it requires a rather large number of sub-samples
$\rS_i$. Concretely, with optimal $m=\Theta(\eps^{-1}(d + \lg(1/\delta)))$ samples, it requires $m^{\lg_4 3} \approx m^{0.79}$
sub-samples of linear size $|\rS_i| = \Omega(m)$, resulting in a
somewhat slow learning algorithm.

\paragraph{Our Contribution.}
In this work, we present an alternative optimal PAC learner in the
realizable setting. Surprisingly, our algorithm is not new, but
actually pre-dates Hanneke's algorithm by twenty years. Concretely, we
show that the heuristic known as bagging (bootstrap aggregation) by
Breiman~\cite{bagging}, also gives an optimal PAC learner. Bagging, and its
slightly more involved extension known as \emph{random forest}~\cite{randomforest}, have
proved very efficient in practice and are classic topics in
introduction to machine learning courses.

In bagging, for $t$
iterations, we sample a subset $\rS_i$ of $n$ independent and uniform samples with replacement
from $\rS$. We then run ERM on each $\rS_i$ to produce hypotheses
$h_{\rS_i} \in \Hyp$ and finally output the hypothesis
$f_{\rS_1,\dots,\rS_t}$ taking the majority vote $f_{\rS_1,\dots,\rS_t }(x) = \sign(\sum_i h_{\rS_i}(x))$ among the
$h_{\rS_i}$'s. The sub-samples $\rS_i$ are referred to as \emph{bootstrap
  samples}.

While being similar to Hanneke's algorithm, it is 
simpler to construct the subsets $\rS_i$, and also, we show that it
suffices with just $t=O(\lg(m/\delta))$ bootstrap samples of size $n$ for any
$0.02m \leq n \leq m$, which should be compared to Hanneke's algorithm
requiring $m^{0.79}$ subsets (where $m$ is optimal $\Theta(\eps^{-1}(d
+ \lg(1/\delta)))$). For ease of notation, let $\Dist_c$ denote the
distribution of a pair $(\rx,c(\rx))$ with $\rx \sim \Dist$. Our result is then formalized in the
following theorem
\begin{theorem}
  \label{thm:mainintro}
There is a universal constant $a > 0$ such that for every $0 <
\delta < 1$, every distribution $\Dist$ over an input domain $\Xs$, every concept
class $\Hyp \subseteq \Xs \to \{-1,1\}$ of VC-dimension $d$ and every
$c \in \Hyp$, if $t \geq
18 \ln(2m/\delta)$ and $0.02m \leq n \leq m$, then it holds with
probability at least $1-\delta$ over the random choice of a training
set $\rS \sim \Dist_c^m$ and $t$ bootstrap samples $\rS_1,\dots,\rS_t \subset \rS$ of size $n$,
that the hypothesis $f_{\rS_1,\dots,\rS_t }$ produced by bagging satisfies
  \[
    \Loss_\Dist(f_{\rS_1,\dots,\rS_t }) = \Pr_{\rx \sim \Dist}[f_{\rS_1,\dots,\rS_t }(\rx) \neq c(\rx)] \leq a \cdot \frac{d + \ln(1/\delta)}{m}.
    \]
\end{theorem}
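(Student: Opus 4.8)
The plan is to show that the majority vote $f_{\rS_1,\dots,\rS_t}$ errs on a point $\rx$ only if an unusually large fraction of the bootstrap samples $\rS_i$ happen to produce a bad hypothesis on $\rx$, and that this is exponentially unlikely once $t = \Omega(\lg(m/\delta))$. Fix the training set $\rS$ first. For a point $\rx$ with $h_{\rS_i}(\rx) \ne c(\rx)$, the hypothesis $h_{\rS_i}$ is an ERM output on $\rS_i$ that is wrong on $\rx$; by classical uniform-convergence / $\eps$-net arguments for VC classes, conditioned on $\rS$ the probability (over the random bootstrap sample $\rS_i$) that ERM returns a hypothesis consistent with $\rS_i$ but with empirical error on $\rS$ exceeding, say, $c_0(d\lg n + \lg(1/\delta))/n = O((d+\lg(1/\delta))/m)$ is small — in fact one wants it below $1/3$. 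This is where the constant $0.02m \le n \le m$ enters: $n$ is within a constant factor of $m$, so sampling $n$ points with replacement from $\rS$ behaves, up to constants, like sampling $\Theta(m)$ fresh points, and the standard VC bound applied with confidence parameter a constant gives that with probability $\ge 2/3$ over $\rS_i$, the ERM hypothesis $h_{\rS_i}$ has error on the empirical distribution of $\rS$ at most $\alpha := C(d+\lg(1/\delta))/m$ for a suitable constant $C$.

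Next I would take a union / boosting-style step over the $t$ samples. Call a bootstrap sample $\rS_i$ \emph{good} if $h_{\rS_i}$ has $\rS$-empirical error $\le \alpha$. Each $\rS_i$ is good independently with probability $\ge 2/3$, so by a Chernoff bound, with $t \ge 18\ln(2m/\delta)$ the number of good samples exceeds $t/2$ except with probability $\le \delta/(2m)$ — wait, we need this to hold simultaneously for all relevant failure events, so let me instead argue directly about the vote. If more than $t/2$ of the $\rS_i$ are good, then for the majority vote $f$ to be wrong on an $\rx$ drawn from the \emph{empirical} distribution of $\rS$, a dangerous event must occur; but more cleanly, I will bound $\E[\Loss_{\hat\Dist_\rS}(f)]$ where $\hat\Dist_\rS$ is the empirical distribution. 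Since each good $h_{\rS_i}$ is wrong on at most an $\alpha$-fraction of $\rS$, a point $\rx \in \rS$ on which $f$ errs must have at least a $(1/2 - o(1))$-fraction of the \emph{good} $\rS_i$ wrong on it; averaging, the $\rS$-fraction of such $\rx$ is at most $O(\alpha)$. To make this rigorous I would introduce, for each $j \in [m]$, the indicator that $\rS_i$ misclassifies $\rx_j$, note its bootstrap-expectation is $\le \alpha$ conditioned on goodness, and apply a Chernoff bound over the (at least $t/2$) good samples to control the fraction of them wrong on $\rx_j$, then union bound over the $m$ points $\rx_j$ — this is exactly where the $t \ge 18\ln(2m/\delta)$ threshold is used, giving failure probability $\le \delta/2$ that \emph{some} $\rx_j$ has more than a third of the good samples wrong on it.

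Putting the two pieces together: except with probability $\delta/2$ over $(\rS_1,\dots,\rS_t)$ conditioned on $\rS$, the empirical error $\Loss_{\hat\Dist_\rS}(f) \le O(\alpha) = O((d+\lg(1/\delta))/m)$. Finally I would lift this from the empirical distribution to $\Dist$ itself: $f$ is a majority vote of $t$ hypotheses each from a class of VC-dimension $d$, so $f$ lies in a class of VC-dimension $O(td\lg t) = O(d \cdot \mathrm{polylog}(m/\delta))$; the generic VC generalization bound would lose polylog factors, which is \emph{not} good enough for the claimed $O((d+\lg(1/\delta))/m)$. So instead I would use the sharper route: apply a one-sided uniform-convergence bound of the form "if $f$ from a class of VC-dimension $d'$ has empirical error $\le \beta$ on $m$ samples then its true error is $O(\beta + (d' + \lg(1/\delta))/m)$" — but with $d' = O(d\,\mathrm{polylog})$ this still loses. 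The genuinely clean fix, and what I expect the paper does, is to \emph{not} pass through a VC bound for $f$ at all: instead observe that if $\Loss_\Dist(f) > a(d+\lg(1/\delta))/m$, then (by a net argument over $\Dist$, set up \emph{before} drawing $\rS$) with high probability $\Loss_{\hat\Dist_\rS}(f)$ is also large — but $f$ depends on $\rS$, so one must take a union bound over all possible "bad majority-vote functions", whose count is controlled by the growth function of the VC-$d$ base class raised to the $t$-th power, i.e. $m^{O(dt)}$; taking logs this costs $O(dt\lg m) = O(d\lg m \lg(m/\delta))$ in the exponent, which divided by $m$ is still too big. I therefore expect the real argument is subtler — likely bounding $\Loss_\Dist(f)$ by first showing each individual $h_{\rS_i}$ has true error $O((d+\lg(1/\delta))/m)$ with constant probability (plain ERM bound, no polylog), then arguing the majority vote's true error is at most a constant times the \emph{average} true error of the components via a clever symmetrization that exploits that $f(\rx)$ wrong forces many components wrong — the expectation over fresh $\rx$ of the fraction of components wrong is $O((d+\lg(1/\delta))/m)$, and Markov plus the majority threshold does the rest. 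The main obstacle, and the crux of the whole proof, is precisely this last step: converting "each component has small true error with only constant confidence" into "the majority vote has small true error with confidence $1-\delta$" without paying the $\lg(1/\eps)$ factor that a naive union bound or VC bound on the aggregated class would incur; getting the dependence on $n$ (and hence the constant $0.02$) to interact correctly with the with-replacement resampling is the supporting technical difficulty.
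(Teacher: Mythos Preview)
Your proposal correctly identifies the crux but does not resolve it, and the line of attack you sketch cannot yield the claimed bound. The first step---bounding the empirical error of $h_{\rS_i}$ on $\rS$ via a VC/$\eps$-net argument---already gives $O((d\ln n)/n) = O((d\ln m)/m)$, not $O(d/m)$: treating $\rS$ as the ``distribution'' and the bootstrap sample as $n$ draws from it is precisely the proper-learning setting in which ERM provably incurs the extra $\ln(1/\eps)$ factor, and there is no reason for it to disappear here. The alternative you float at the end---show each $h_{\rS_i}$ has \emph{true} error $O((d+\ln(1/\delta))/m)$ with constant probability, then Markov on the fraction of components wrong at $\rx$---yields only an \emph{expectation} bound $\E[\Loss_\Dist(f)] = O((d+\ln(1/\delta))/m)$, not one holding with probability $1-\delta$. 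Upgrading expectation to high probability via Markov costs a factor $1/\delta$, and Chernoff over $i$ is unavailable because the $h_{\rS_i}$ are not independent once $\rS$ is random (they all share $\rS$). This is exactly the obstacle that kept the optimal-sample-complexity question open for decades: naive aggregation of constant-confidence ERM outputs does not remove $\ln(1/\eps)$.

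The paper's route is entirely different and never passes through the empirical distribution on $\rS$. It first eliminates the bootstrap randomness: for any fixed $S$, let $g_S$ be the voting classifier averaging $h_{S(I)}$ over \emph{all} $I \in [m]^n$; a Hoeffding bound over the $t$ independent bootstrap indices gives $\Loss_\Dist(f_{S,\Boots}) \le \Loss_\Dist^{1/3}(g_S) + 1/m$ with probability $1-\delta/2$ over $\Boots$ once $t \ge 18\ln(2m/\delta)$. The real work is then showing $\Loss_\Dist^{1/3}(g_\rS) = O((d+\ln(1/\delta))/m)$ with probability $1-\delta/2$ over $\rS$. For this the paper partitions $[m]^n$ (up to a low-probability junk bucket for atypical $|\Distinct(I)|$) into \emph{buckets}, indexed essentially by permutations of $[m]$, so that within each bucket the index-vectors have exactly the overlap structure of Hanneke's recursive sub-sampling scheme (with $19$ recursive calls instead of $3$). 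Since permuting the coordinates of $\rS$ preserves its distribution, Hanneke's inductive argument applies bucket-by-bucket to give $\Loss_\Dist^{5/6}(g_{\rS(\Buc)}) = O((d+\ln(1/\delta))/m)$ with probability $1-\delta$. A symmetry-plus-averaging step then shows that whenever $g_\rS$ has margin $\le 1/3$ at $\rx$, a constant fraction of buckets have margin $\le 5/6$ at $\rx$, transferring the per-bucket bound to $g_\rS$. The mechanism that kills the $\ln(1/\eps)$ is Hanneke's recursion imported through the bucket structure, not any direct VC or Chernoff argument over the bootstrap samples.
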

Solving for $m$ such that $\eps \leq   \Loss_\Dist(g)$ gives a sample complexity of
$O(\eps^{-1}(d + \lg(1/\delta))$ as claimed. We remark that the
constants $18$ and $0.02$ can be reduced at the cost of increasing the
unspecified constant $a$.

In addition to providing an alternative and simpler algorithm for
optimal PAC learning, we also believe there is much value in providing
further theoretical justification for the wide practical success of
bagging. 

As an interesting secondary result, if we combine our proof with a
recent work by Larsen and Ritzert~\cite{Larsen2022OptimalWT}, we get
that bagging combined with a version of AdaBoost gives an optimal weak
to strong
learner~\cite{kearns1988learning,kearns1994cryptographic}. We find it
quite remarkable that the combination of the classic techniques bagging
and boosting yields this result.

In Section~\ref{sec:overview}, we first present Hanneke's optimal PAC learner
and highlight the main ideas in his analysis. We then proceed to
present a high-level overview of our proof of Theorem~\ref{thm:mainintro}, which
re-uses some of the ideas from Hanneke's proof. In Section~\ref{sec:proof} we
proceed to give the formal proof of
Theorem~\ref{thm:mainintro}. Finally, in Section~\ref{sec:conclusion},
we comment further on the weak to strong
learning result, as well as discuss some directions for further research.

\section{Proof overview}
\label{sec:overview}
In this section, we first present Hanneke's PAC learning algorithm and discuss the main ideas in his analysis. We then proceed to give a high-level overview of the keys ideas in our proof that bagging is also an optimal PAC learning algorithm. For completeness, we start by recalling the definition of Vapnik-Chervonenkis dimension~\cite{vcdim}, or VC-dimension for short.

A concept class $\Hyp \subseteq \Xs \to \{-1,1\}$ has VC-dimension $d$, where $d$ is the largest integer such that there exists a set of $d$ samples $x_1,\dots,x_d \in \Xs$ for which any labeling of the $d$ samples can be realized by a concept $c \in \Hyp$. That is, $|\{(c(x_1),\dots,c(x_d)) : c \in \Hyp\}| = 2^d$. Throughout the paper, we assume $d \geq 1$ which is always true when $\Hyp$ contains at least two distinct concepts. Also, as the reader may have observed, we consistently use \textbf{bold} face letters to denote random variables.

\paragraph{Hanneke's Algorithm and Analysis.}
As mentioned in Section~\ref{sec:intro}, Hanneke's algorithm constructs a carefully selected collection of sub-samples of the training set $\rS$. These sub-samples are constructed by invoking Algorithm~\ref{alg:hanneke} as \textit{Sub-Sample}($\rS,\emptyset$).

\begin{algorithm}
  \DontPrintSemicolon
  \KwIn{Two sets of training samples $U,V$. 
  }
  \KwResult{Collection consisting of sub-samples.}
  \eIf{$|U| < 4$}{
    \Return{$\{ U \cup V\}$.}
    }
    {
      Partition $U$ into $4$ disjoint sets $U_0,U_1,U_2,U_3$ of $|U|/4$ samples each.\\
      \Return{$\bigcup_{i=1}^{3}$Sub-Sample$(U_0, V \cup (\bigcup_{j \in \{1,2,3\} \setminus \{i\}} U_j))$.} 
    }
  \caption{\textit{Sub-Sample}($U,V$)}\label{alg:hanneke}
\end{algorithm}

For simplicity, we have presented the \textit{Sub-Sample} algorithm assuming that $m$ is a power of $4$. 

Since $|U|$ is reduced by a factor $4$ in each recursive call and there are $3$ such calls, we get that the total number of sub-samples produced is $3^{\lg_4 m} = m^{\lg_4 3} \approx m^{0.79}$.

To analyse the hypothesis produced by invoking \textit{Sub-Sample}($\rS,\emptyset$), running ERM on each produced sub-sample $\rS_i$ to produce hypotheses $h_{\rS_i}$ and taking a majority vote $f_\rS(x) = \sign(\sum_i h_{\rS_i}(x))$, we zoom in on the recursive invocations of \textit{Sub-Sample}($\rU,\rV$). Such an invocation produces a number of sub-samples $\rS_1,\dots,\rS_{t}$ of $\rU \cup \rV \subseteq \rS$. Letting $h_{\rS_1},\dots,h_{\rS_{t}}$ denote the hypotheses obtained by running ERM on these sub-samples and $f_{(\rU,\rV)}(x) = \sign(\sum_i h_{\rS_i}(x))$ the majority vote among them, we prove by induction (with the base being the leaves of the recursion) that with probability at least $1-\delta$ over $\rU$, it holds that $\Loss_\Dist(f_{(\rU,\rV)}) \leq a \cdot (d + \ln(1/\delta))/|\rU|$ for a universal constant $a > 0$. If we can complete this inductive step, then the conclusion follows by examining the root invocation \textit{Sub-Sample}($\rS,\emptyset$).

The base case in the inductive proof is simply when $a \cdot (d + \ln(1/\delta))/|\rU| > 1$. Here the conclusion follows trivially as we always have $\Loss_\Dist(f_{(\rU,\rV)}) \leq 1$. For the inductive step, let $f_{1,(\rU,\rV)}, f_{2,(\rU,\rV)}$ and $f_{3,(\rU,\rV)}$ denote the majority voters produced by the three recursive calls \textit{Sub-Sample}($\rU_0,\rV \cup \rU_2 \cup \rU_3$), \textit{Sub-Sample}($\rU_0,\rV \cup \rU_1 \cup \rU_3$) and \textit{Sub-Sample}($\rU_0,\rV \cup \rU_1 \cup \rU_2$). Each of these have $\Loss_\Dist(f_{i,(\rU,\rV)}) \leq a \cdot (d + \ln(1/\delta))/|\rU_i| =  4a \cdot (d + \ln(1/\delta))/|\rU|$ by the induction hypothesis (except with some probability  $\delta$ which we ignore here for simplicity). For short, we say that a hypothesis $h$ errs on $x$ if $h(x) \neq c(x)$. The crux of the argument is now to show that it is very unlikely that $f_{i,(\rU,\rV)}$ errs on a sample $\rx \sim \Dist$ at the same time as a hypothesis $h_{\rS'}$ trained on a sub-sample $\rS'$ produced by a recursive call $j \neq i$ also errs on $\rx$.

For this step, let us wlog. consider the majority vote $f_{1,(\rU,\rV)}$ (over hypotheses produced by ERM on \textit{Sub-Sample}($\rU_0,\rV \cup \rU_2 \cup \rU_3$)). Intuitively, if $\Loss_\Dist(f_{1,(\rU,\rV)}) \ll a \cdot (d + \ln(1/\delta))/|\rU|$ then the hypotheses in $f_{1,(\rU,\rV)}$ contribute little to $\Loss_\Dist(f_{(\rU,\rV)})$. So assume instead $\Loss_\Dist(f_{1,(\rU,\rV)}) \approx 4a \cdot (d + \ln(1/\delta))/|\rU|$. Consider now some hypothesis $h_{\rS'}$ obtained by running ERM on a sub-sample $\rS'$ produced by the recursive call \textit{Sub-Sample}($\rU_0,\rV \cup \rU_1 \cup \rU_3$). The key observation and property of the \textit{Sub-Sample} algorithm, is that \emph{all} hypotheses in the majority vote $f_{1,(\rU,\rV)}$ have been trained on sub-samples that \emph{exclude} all of $\rU_1$. This means that the samples $\rU_1$ are independent of $f_{1,(\rU,\rV)}$. When $\Loss_\Dist(f_{1,(\rU,\rV)}) \approx 4a \cdot (d + \ln(1/\delta))/|\rU|$, we will now see about $|\rU_1|4a \cdot (d + \ln(1/\delta))/|\rU| = a \cdot (d + \ln(1/\delta))$ samples $(\rx,c(\rx))$ in $\rU_1$ for which $f_{1,(\rU,\rV)}(\rx) \neq c(\rx)$. The observation is that, conditioned on $f_{1,(\rU,\rV)}$, these samples are i.i.d. from the conditional distribution $\Dist( \cdot \mid f_{1,(\rU,\rV)} \textrm{ errs})$. The second key observation is that $h_{\rS'}$ is obtained by ERM on a sub-sample $\rS'$ that \emph{includes} all of $\rU_1$ (we add $\rU_1$ to $\rV$ in both of the other recursive calls). Moreover, since we are in the realizable setting, we have $h_{\rS'}(\rx) = c(\rx)$ for every $\rx \in \rU_1$. In particular, this holds for all the samples where $f_{1,(\rU,\rV)}(\rx) \neq c(\rx)$. The classic sample complexity bounds for proper PAC learning in the realizable setting then implies that $h_{\rS'}$ has $\Loss_{\Dist( \cdot \mid f_{1,(\rU,\rV)} \textrm{ errs})}(h_{\rS'}) = O((d + \ln(1/\delta))/(a \cdot (d + \ln(1/\delta)))) \leq 1/200$ for $a$ sufficiently large. Note that this is under the conditional distribution $\Dist(\cdot \mid f_{1,(\rU,\rV)} \textrm{ errs})$. That is, $h_{\rS'}$ rarely errs when $f_{1,(\rU,\rV)}$ errs. We thus get that $\Pr[f_{1,(\rU,\rV)} (\rx) \neq c(\rx) \wedge h_{\rS'}(\rx) \neq c(\rx)] = \Pr[f_{1,(\rU,\rV)}(\rx) \neq c(\rx)] \cdot \Pr[h_{\rS'}(\rx) \neq c(\rx) \mid f_{1,(\rU,\rV)}(\rx) \neq c(\rx)] \leq (a/50) \cdot (d + \ln(1/\delta))/|\rU|$. Since this holds for every $f_{i,(\rU,\rV)}$ and $h_{\rS'}$ from a recursive call $j \neq i$, we can now argue that $\Loss_\Dist(f_{(\rU,\rV)}) \ll a \cdot (d + \ln(1/\delta))/|\rU|$. To see this, note first that for $f_{(\rU,\rV)}$ to err on an $x \in \Xs$, it must be the case that at least one $f_{i,(\rU,\rV)}$ also errs. Even in this case, since one recursive call only contributes a third of the hypotheses in the majority vote $f_{(\rU,\rV)}$, there must be many hypotheses $h_{\rS'}$ (at least a $1/2-1/3 = 1/6$ fraction of all hypotheses) trained from sub-samples $\rS'$ produced by the recursive calls $j \neq i$ that also err on $x$. But we have just argued that it is very unlikely that both $f_{i,(\rU,\rV)}$ and such an $h_{\rS'}$ err at the same time. Formalizing this intuition completes the inductive proof.

\paragraph{Bagging Analysis.}
We now turn to presenting the key ideas in our proof of Theorem~\ref{thm:mainintro}, i.e. that bagging is an optimal PAC learner. Along the way, we also discuss the issues we encounter towards establishing the result. Recall that in bagging with a training set $\rS  \sim \Dist_c^m$, we randomly sub-sample $t$ bootstrap samples $\rS_1,\dots,\rS_t \subset \rS$ each consisting of $n$ i.i.d. samples with replacement from $\rS$. We then run ERM on each $\rS_i$ to produce hypotheses $h_{\rS_1},\dots,h_{\rS_t}$ and finally return the majority vote $f_{\rS_1,\dots,\rS_t}(x) = \sign(\sum_i h_{\rS_i}(x))$. It will be convenient for us to think of $f_{\rS_1,\dots,\rS_t}$ in a slightly different way. Concretely, we instead let $f_{\rS_1,\dots,\rS_t}(x) = (1/t)\sum_i h_{\rS_i}(x)$ be a \emph{voting classifier}. Then $\sign(f_{\rS_1,\dots,\rS_t}(x)) \neq c(x)$ if and only if $f_{\rS_1,\dots,\rS_t}(x)c(x) \leq 0$. We thus seek to bound $\Loss_\Dist(f_{\rS_1,\dots,\rS_t}) = \Pr_{\rx \sim \Dist}[f_{\rS_1,\dots,\rS_t} (\rx)c(\rx) \leq 0]$. The motivation for thinking about $f_{\rS_1,\dots,\rS_t}$ as a voting classifier, is that it allows us to re-use some of the ideas that appear in proving generalization bounds for AdaBoost~\cite{adaboost} and other voting classifiers.

We first observe that similarly to Hanneke's sub-sampling, if we look at just two hypotheses $h_{\rS_i}$ and $h_{\rS_j}$ with $i \neq j$, then $h_{\rS_i}$ is trained on a bootstrap sample $\rS_i$ leaving out a rather large portion of $\rS$. Furthermore, $h_{\rS_j}$ will be trained on most of these left-out samples and thus one could again argue that it is unlikely that $h_{\rS_i}$ and $h_{\rS_j}$ err at the same time. Unfortunately, this line of argument fails when we start combining a non-constant number of hypotheses. In particular, with high probability over the bootstrap samples, the union of any set of $\ell$ bootstrap samples contains all but an $\exp(-\Omega(\ell))$-fraction of $\rS$. This leaves very few samples that are independent of the hypotheses trained on such $\ell$ bootstrap samples. Trying to repeat Hanneke's argument unfortunately requires $\Omega(m)$ independent samples towards the last steps of an inductive proof. In a nutshell, what saves Hanneke's construction is that a third all sub-samples together still leave out a quarter of the training data. For bagging, such a property is just not true if we have more than a constant number of bootstrap samples.

Abandoning the hope of directly applying Hanneke's line of reasoning, we instead start by relating the performance of bagging to that of a particular voting classifier that is deterministically determined from a training set $S$, i.e. we get rid of the bootstrap samples. To formalize this, we first introduce some notation. From a training set $S$ of $m$ samples $(x_1,c(x_1)),\dots,(x_m,c(x_m))$ and a vector of $n$ not necessarily distinct integers $I = (i_1,\dots,i_n) \in [m]^n$, let $S(I)$ denote the bootstrap sample $(x_{i_1},c(x_{i_1})),\dots,(x_{i_n},c(x_{i_n}))$. Then a random bootstrap sample $\rS_i$ from $S$ has the same distribution as if we draw $\rInd$ uniformly from $[m]^n$ and let $\rS_i = S(\rInd)$. Also, let $h_{S(I)} \in \Hyp$ denote the hypothesis resulting from running ERM on $S(I)$. Finally, for a list of $t$ vectors $B = (I_1,\dots,I_t) \in [m]^{n \times t}$, we let $f_{S,B} = (1/t)\sum_{i=1}^t h_{S(I_i)}$ denote the voting classifier produced by bagging with bootstrap samples $S(I_1),\dots,S(I_t)$. Using the notation $\Boots \sim [m]^{n \times t}$ to denote a uniform random $\Boots$ from $[m]^{n \times t}$ we thus have that the hypothesis produced by bagging on $S$ has the same distribution as $f_{S,\Boots}$.

Now consider the following voting classifier
\[
  g_S(x) := \frac{1}{m^n} \sum_{I \in [m]^n} h_{S(I)}(x).
\]
That is, $g_S$ is the voting classifier averaging the predictions over all $m^n$ possible bootstrap samples of $S$. Of course one would never compute $g_S$. Nonetheless, the performance of the random $f_{S,\Boots}$ with $\Boots \sim [m]^{n \times t}$ is closely related to that of $g_S$. To see this, we introduce the notion of \emph{margins} which are typically used in the study of generalization of voting classifiers, see e.g. the works~\cite{bartlett:margins, gao2013doubt, Larsen2022OptimalWT}. For a sample $x \in \Xs$ and voting classifier $f(x) = (1/t)\sum_{i=1}^t h_i(x)$, we say that $f$ has margin $f(x)c(x)$ on the sample $x$. Since each $h_i(x)$ is in $\{-1,1\}$, we have the margin is a number between $-1$ and $1$. Intuitively, $1$ represents that all the hypotheses $h_i$ agree on the label of $x$ and those predictions are correct. In general, a margin of $\gamma$ implies that an $\alpha$-fraction of the hypotheses $h_i$ are correct, where $\alpha - (1-\alpha) = \gamma \Rightarrow \alpha = 1/2+\gamma/2$. For a margin $0 \leq \gamma \leq 1$, define $\Loss_\Dist^\gamma(f) = \Pr_{\rx \sim \Dist}[f(\rx)c(\rx) \leq \gamma]$. That is, $\Loss_\Dist^\gamma(f)$ is the probability over a random sample $\rx$ from $\Dist$ that $f$ has margin at most $\gamma$ on $\rx$. We have $\Loss_\Dist(f) = \Loss^0_{\Dist}(f)$. With margins defined, we show that for every training set $S = \{(x_i,c(x_i))\}_{i=1}^m$, if $t = \Omega(\ln(m/\delta))$, then with probability $1-\delta$ over $\Boots \sim [m]^{n \times t}$, we have
\begin{eqnarray}
  \label{eq:tog}
  \Loss_\Dist(f_{S,\Boots}) \leq \Loss^{1/3}_\Dist(g_S) + 1/m.
\end{eqnarray}
What this gives us, is that it suffices to understand how often the voting classifier that averages over all possible bootstrap samples has margin at most $1/3$. To see why~\eqref{eq:tog} is true, notice that every hypothesis $h_{S(\rInd_i)}$ in $f_{S,\Boots} = (1/t)\sum_{i=1}^t h_{S(\rInd_i)}$ is uniform random among the hypotheses averaged by $g_S$. Hence for any $x \in \Xs$ where $g_S$ has margin more than $1/3$, we have $\E_{\rInd_i \sim [m]^n}[h_{S(\rInd_i)}(x)c(x)] > 1/3$. A Chernoff bound and independence of the bootstrap samples implies that $\Pr_{\Boots \sim [m]^{n \times t}}[f_{S,\Boots}(x)c(x) \leq 0] \leq \exp(-\Omega(t)) \leq \delta/m$. Using that this holds for every $x$ with $g_S(x)c(x) > 1/3$ establishes~\eqref{eq:tog}.

Our next step is to show that $\Loss^{1/3}_\Dist(g_{\rS})$ is small with high probability over $\rS \sim \Dist^m_c$. For this, our key idea is to create groups of bootstrap samples $\rS(I)$ with $I \in [m]^n$.  These groups have a structure similar to those produced by Hanneke's \textit{Sub-Sample} procedure. We remark that these groups are only for the sake of analysis and are not part of the bagging algorithm.

For simplicity, let us for now assume that bagging produced samples without replacement instead of with replacement. To indicate this, we slightly abuse notation and let $\binom{m}{n}$ denote all vectors $I \in [m]^n$ where all entries are distinct. Also, let us assume that $n$ precisely equals the number of samples in each sub-sample created by Hanneke's \textit{Sub-Sample} (technically, this is $n = m - \sum_{i=0}^{\lg_4 m -1} 4^i$). For a set $S = (x_1,c(x_1)),\dots,(x_m,c(x_m))$, let $\HanI$ denote the collection of all vectors $I \in \binom{m}{n}$ such that $S(I)$ is one of the sub-samples produced by \textit{Sub-Sample}($S,\emptyset$). Note that $\HanI$ only depends on $m$, not on $S$ itself. We now define \emph{buckets} $\Buc_i$ of vectors $I \in \binom{m}{n}$ (i.e. of vectors corresponding to bootstrap samples). For every permutation $\pi$ of the indices $1,\dots,m$, we create a bucket $\Buc_\pi$. We add a vector $I=(i_1,\dots,i_n) \in \binom{m}{n}$ to $\Buc_\pi$ if and only if $\pi(I) =(\pi(i_1),\dots,\pi(i_n))$ is in $\HanI$.

With these buckets defined, we now make several crucial observations. First, for any bucket $\Buc_\pi$, if $\rS \sim \Dist^m_c$, then the joint distribution of the bootstrap samples $\rS(I)$ with $I \in \Buc_\pi$ is precisely the same as the joint distribution of the sub-samples produced by \textit{Sub-Sample}($\rS,\emptyset$). This holds since permuting the samples in $\rS$ does not change their distribution. Hence for any bucket, Hanneke's analysis shows that the majority of hypotheses $h_{\rS(I)}$ with $I \in \Buc_\pi$ rarely errs. More precisely, if we let $f_{\rS,\pi} = (1/|\Buc_\pi|) \sum_{I \in \Buc_\pi} h_{\rS(I)}$ then $\Loss_\Dist(f_{\rS,\pi}) = O((d+\ln(1/\delta))/m)$ with probability $1-\delta$ over $\rS$. Here we need something slightly stronger, namely that $\Loss^{5/6}_\Dist(f_{\rS,\pi}) = O((d+\ln(1/\delta))/m)$. Assume for now that this holds.

Next observe that if $x \in \Xs$ has $g_S(x)c(x) \leq 1/3$ for a training set $S$, then at least one third of the hypotheses $h_{S(I)}$ with $I \in \binom{m}{n}$ err on $x$, i.e. $h_{S(\rInd)}$ for $\rInd$ uniform in $\binom{m}{n}$ errs on $x$ with probability at least $1/3$ (here we assume that $g_S$ averages over $h_{S(I)}$ with $I \in \binom{m}{n}$, i.e. sampling without replacement instead of with). Symmetry now implies that every $I \in \binom{m}{n}$ is included in equally many buckets and all buckets contain equally many vectors $I$. This observation implies that the uniform $\rInd \sim \binom{m}{n}$ has the same distribution as if we first sample a uniform random bucket $\rBuc$ and then sample a uniform random $\rInd$ from $\rBuc$. But then if $h_{S(\rInd)}$ errs on $x$ with probability at least $1/3$, it must be the case that a constant fraction of the buckets $\Buc_\pi$ have $f_{S,\pi}(x)c(x) \leq 5/6$. This intuitively gives us that with high probability over $\rS$, $\Loss^{1/3}_\Dist(g_{\rS})$ can only be a constant factor larger than $\Loss^{5/6}_\Dist(f_{\rS,\pi})$. But $\Loss^{5/6}_\Dist(f_{\rS,\pi})$ is $O((d+\ln(1/\delta))/m)$ with high probability, establishing the same thing for $\Loss^{1/3}_\Dist(g_\rS)$ as desired.

The above are the main ideas in the proof, though making the steps completely formal requires some care. Also, let us briefly comment on some of the assumptions made above. First, we assumed that Hanneke's proof could establish the stronger claim $\Loss^{5/6}_\Dist(f_{\rS,\pi}) = O((d+\ln(1/\delta))/m)$, not just that $\Loss_\Dist(f_{\rS,\pi}) = O((d+\ln(1/\delta))/m)$. To formally carry out this argument, we have to change \textit{Sub-Sample} to partition $U$ into $20$ subsets instead of $4$ and make $19$ recursive calls instead of $3$. Next, we assumed above that $\rInd$ was sampled without replacement. This was used to establish symmetry of the buckets. To handle sampling with replacement, let $\Distinct(I)$ denote the set of distinct values appearing in the entries of a vector $I \in [m]^n$. We now create a family of buckets similarly to above for every cardinality of $|\Distinct(I)|$. This gives symmetry among buckets corresponding to the same cardinality. We then show that these families of buckets may be combined by allowing a non-uniform sampling over the families. Finally, this step also requires us to handle bootstrap samples corresponding to $I$ with very small $|\Distinct(I)|$, e.g. as small as just $1$. Since such $I$ are extremely unlikely, this can be charged to the failure probability $\delta$.

In the next section, we proceed to give the formal proof of Theorem~\ref{thm:mainintro}.

\section{Bagging is an optimal PAC learner}
\label{sec:proof}
We are ready to give the formal proof of Theorem~\ref{thm:mainintro}. We have restated the theorem here in the notation introduced in Section~\ref{sec:overview}
\begin{theorem}
  \label{thm:main}
There is a universal constant $a > 0$ such that for every $0 <
\delta < 1$, every distribution $\Dist$ over an input domain $\Xs$, every concept
class $\Hyp \subseteq \Xs \to \{-1,1\}$ of VC-dimension $d$ and every
$c \in \Hyp$, if $t \geq
18 \ln(2m/\delta)$ and $0.02m \leq n \leq m$, then it holds with
probability at least $1-\delta$ over the random choice of a training
set $\rS \sim \Dist^m_c$ and $t$ bootstrap samples $\Boots \sim [m]^{n \times t}$ that
\[
    \Loss_\Dist(f_{\rS,\Boots}) \leq a \cdot \frac{d + \ln(1/\delta)}{m}.
  \]
\end{theorem}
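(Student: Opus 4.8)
The plan is to follow the roadmap laid out in the proof overview, carrying out the two main reductions in sequence and then invoking a strengthened version of Hanneke's analysis as a black box. Concretely, I would fix an arbitrary training set $S \in \Xs^m$ (together with the labels $c(x_i)$) and first establish inequality~\eqref{eq:tog}: with probability at least $1-\delta/2$ over $\Boots \sim [m]^{n \times t}$, we have $\Loss_\Dist(f_{S,\Boots}) \leq \Loss^{1/3}_\Dist(g_S) + 1/m$. The argument is a union bound combined with Chernoff: for any fixed $x$ with $g_S(x)c(x) > 1/3$, the $t$ hypotheses $h_{S(\rInd_i)}$ are i.i.d.\ with mean margin exceeding $1/3$, so $\Pr_{\Boots}[f_{S,\Boots}(x)c(x) \leq 0] \leq e^{-t/18} \leq \delta/(2m)$ by the choice $t \geq 18\ln(2m/\delta)$. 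Since $f_{S,\Boots}$ is determined by the multiset of $t$ hypotheses $h_{S(\rInd_1)},\dots,h_{S(\rInd_t)}$, each drawn from the finite family $\{h_{S(I)} : I \in [m]^n\}$, one can union-bound over the (at most $m^n \cdot {\rm poly}$) relevant configurations, or more cleanly argue that the set of $x$ on which $f_{S,\Boots}$ and $g_S$ disagree beyond the margin threshold is contained in a set whose $\Dist$-measure is controlled; I would phrase this via the standard trick of bounding the expected loss and then applying Markov, handing us the high-probability statement after rescaling $\delta$. This step is mechanical.

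The second and substantive reduction is to bound $\Loss^{1/3}_\Dist(g_{\rS})$ over the random draw $\rS \sim \Dist^m_c$ by $O((d+\ln(1/\delta))/m)$. Here I would introduce, for each cardinality $k$ in the relevant range (roughly $k$ close to $n$, with the tiny-$|\Distinct(I)|$ cases deferred to the failure probability), the family of buckets $\Buc_\pi$ indexed by permutations $\pi \in S_m$, defined so that $I \in \Buc_\pi$ iff $\pi(I) \in \HanI$, where $\HanI$ is the (modified, $20$-way) Hanneke sub-sample pattern on the distinct coordinates. The key structural facts to verify are: (i) for each fixed $\pi$, the joint distribution of $(\rS(I))_{I \in \Buc_\pi}$ equals that of Hanneke's sub-samples on a permuted training set, hence is distributionally identical to Hanneke's construction; (ii) a symmetry/double-counting identity showing every $I$ (of a given $|\Distinct(I)|$) lies in equally many buckets and every bucket has equal size, so that sampling $\rInd$ uniformly from $\binom{m}{n}$ (or the appropriate conditional) factors as ``pick a uniform bucket, then a uniform element of it''; and (iii) the margin-transfer step: if $g_S(x)c(x) \leq 1/3$ then at least a $1/3$ fraction of all $h_{S(I)}$ err on $x$, and by (ii) a constant fraction of buckets $\pi$ must have $f_{\rS,\pi}(x)c(x) \leq 5/6$ (a pigeonhole argument: a bucket with margin $>5/6$ has fewer than a $1/12$ fraction of its hypotheses erring). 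Combining (iii) with a uniformly-random-bucket argument, $\E_\pi[\Loss^{5/6}_\Dist(f_{\rS,\pi})] \geq c' \cdot \Loss^{1/3}_\Dist(g_\rS)$ for a constant $c'$, so it suffices to bound $\E_\pi[\Loss^{5/6}_\Dist(f_{\rS,\pi})]$, equivalently to bound $\Loss^{5/6}_\Dist(f_{\rS,\pi})$ for a single fixed $\pi$ in expectation over $\rS$.

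That last ingredient is where I expect the real work: I need a version of Hanneke's theorem giving not just $\Loss_\Dist(f_{\rS,\pi}) = O((d+\ln(1/\delta))/m)$ but the margin-robust bound $\Loss^{5/6}_\Dist(f_{\rS,\pi}) = O((d+\ln(1/\delta))/m)$ with probability $1-\delta$. To get the $5/6$ margin I would re-run Hanneke's inductive argument with the $20$-way partition (so that a single recursive branch contributes only a $1/20$ fraction of the votes): for $f_{(\rU,\rV)}$ to have margin $\leq 5/6$ on $x$, at least a $1/12$ fraction of all hypotheses must err, so at least one recursive call $i$ contributes, and then at least a $1/12 - 1/20 > 1/30$ fraction of hypotheses from the other $19$ calls err, and these are trained on sub-samples containing $\rU_1$, so the independence-and-proper-learning argument (conditioning on the errors of $f_{i,(\rU,\rV)}$, using that ERM sees $\Omega(a(d+\ln(1/\delta)))$ clean samples from the conditional distribution) kicks in verbatim. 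The inductive invariant becomes ``$\Loss^{5/6}_\Dist(f_{(\rU,\rV)}) \leq a(d+\ln(1/\delta))/|\rU|$'' and the base case is again when the right side exceeds $1$. I would also need to handle sampling-with-replacement carefully: stratify by $|\Distinct(\rInd)|$, run the above for each stratum, observe $\rInd \sim [m]^n$ is a mixture over strata with the tiny-cardinality strata carrying negligible mass (bounded by e.g. $\binom{n}{2}/m \leq n^2/m$ summed appropriately, absorbed into $\delta$ after noting this is at most $1/m$ for the relevant regime — or more carefully, a tail bound on the number of distinct values in $n$ uniform draws from $[m]$), and combine the per-stratum bounds by a weighted union bound. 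Finally I would assemble: with probability $1-\delta/2$ over $\rS$, $\Loss^{1/3}_\Dist(g_\rS) \leq (a/2-1)(d+\ln(1/\delta))/m$, and with probability $1-\delta/2$ over $\Boots$, $\Loss_\Dist(f_{\rS,\Boots}) \leq \Loss^{1/3}_\Dist(g_\rS) + 1/m$; a union bound and the observation that $1/m \leq (d+\ln(1/\delta))/m$ finishes the proof with the universal constant $a$. The constraint $0.02m \leq n \leq m$ enters precisely in guaranteeing the distinct-coordinate count $|\Distinct(\rInd)|$ is large enough (a constant fraction of $m$) with overwhelming probability, which is what the Hanneke-style sub-sample sizes require.
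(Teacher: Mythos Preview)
Your overall architecture matches the paper's proof closely: reduce $f_{S,\Boots}$ to $g_S$ via Hoeffding and Markov (your ``more cleanly'' remark is exactly the paper's Lemma~1), then analyse $\Loss^{1/3}_\Dist(g_\rS)$ via a family of buckets obtained by symmetrising a $20$-way Hanneke construction, stratified by $|\Distinct(I)|$. The margin-strengthened Hanneke induction you sketch is also essentially the paper's Lemma~5.

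There is, however, a genuine gap in your step from (iii) to the high-probability bound on $\Loss^{1/3}_\Dist(g_\rS)$. You correctly derive the pointwise-in-$\rS$ inequality $\E_\pi[\Loss^{5/6}_\Dist(f_{\rS,\pi})] \geq c'\,\Loss^{1/3}_\Dist(g_\rS)$, and then assert that bounding the left side is ``equivalently to bound $\Loss^{5/6}_\Dist(f_{\rS,\pi})$ for a single fixed $\pi$ in expectation over $\rS$''. This equivalence is false for the purpose at hand. Symmetry over $\pi$ only gives $\E_\rS\bigl[\E_\pi[\Loss^{5/6}_\Dist(f_{\rS,\pi})]\bigr] = \E_\rS[\Loss^{5/6}_\Dist(f_{\rS,\pi_0})]$, an \emph{expectation} bound; applying Markov to convert this to a probability-$(1-\delta)$ statement over $\rS$ costs a factor $1/\delta$, yielding $\Loss^{1/3}_\Dist(g_\rS) = O(d/(m\delta))$ rather than the claimed $O((d+\ln(1/\delta))/m)$. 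The per-bucket high-probability bound you later invoke does not help, because there are $m!$ buckets and you cannot union-bound over them.

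The paper avoids this loss by strengthening (iii) via a reverse-Markov step \emph{before} averaging over $\rx$. From your per-$x$ statement one gets, conditioning on the event $E=\{g_S(\rx)c(\rx)\le 1/3\}$, that $\E_\pi[Z(\pi)]\ge c'$ where $Z(\pi):=\Pr_{\rx\sim\Dist(\cdot\mid E)}[f_{S,\pi}(\rx)c(\rx)\le 5/6]\in[0,1]$. Reverse Markov on $1-Z$ then gives $\Pr_\pi[Z(\pi)\ge c'/2]\ge c''$ for constants $c',c''>0$, and for any such $\pi$ one has $\Loss^{5/6}_\Dist(f_{S,\pi})\ge (c'/2)\,\Loss^{1/3}_\Dist(g_S)$. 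Hence whenever $\Loss^{1/3}_\Dist(g_S)>24\tau$, a constant fraction of buckets satisfy $\Loss^{5/6}_\Dist(f_{S,\pi})>\tau$. Now Markov over $\rS$ on the \emph{indicator} $\Pr_\pi[\Loss^{5/6}>\tau]$, followed by swapping $\E_\rS$ and $\E_\pi$ and taking $\max_\pi$, gives
\[
\Pr_\rS\bigl[\Loss^{1/3}_\Dist(g_\rS)>24\tau\bigr]\;\le\; (1/c'')\,\max_\pi \Pr_\rS\bigl[\Loss^{5/6}_\Dist(f_{\rS,\pi})>\tau\bigr],
\]
and the per-bucket high-probability bound with $\tau=a(d+\ln(1/\delta'))/m$, $\delta'=c''\delta$, closes the argument with the correct $\ln(1/\delta)$ dependence. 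This tail-to-tail transfer (the paper's Lemma~3) is the missing idea in your chain.

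A smaller point: the tiny-$|\Distinct(I)|$ stratum should not be ``absorbed into $\delta$''. The voter $g_S$ is deterministic in $S$ and averages over \emph{all} $I$; there is no random $\rInd$ whose tail event you can charge to the failure probability. The paper instead places all such $I$ (and also those with $|\Distinct(I)|>0.9m$, which you do not mention but which is needed so that the excluded set $V$ leaves room) into a single bucket $\Buc_0$ of mass $\Pdist(0)\le 1/6$, and this $1/6$ is absorbed into the \emph{margin} slack in the computation above, not into $\delta$.
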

From hereon, we let $\Dist$ be an arbitrary distribution over an input domain $\Xs$, let $0 < \delta< 1$ and let $\Hyp$ be an arbitrary concept class of some fixed VC-dimension $d \geq 1$ and let $c \in \Hyp$ be arbitrary. For a training set $S$ and a bootstrap sample $S(I)$ with $I \in [m]^n$, we let $h_{S(I)} \in \Hyp$ be the hypothesis returned by the ERM algorithm used for bagging.

To prove Theorem~\ref{thm:main}, and hence Theorem~\ref{thm:mainintro}, we start by relating the performance of $f_{S,\Boots}$ with $\Boots \sim [m]^{n \times t}$ to another voting classifier $g_S$ that is a deterministic function of $S$. Concretely, for any training set $S$ of $m$ samples, let $g_S$ denote the voting classifier:
\[
  g_S(x) = \frac{1}{m^n} \sum_{I \in [m]^n} h_{S(I)}(x).
\]
That is, $g_S$ is the voting classifier averaging the predictions over all $m^n$ possible bootstrap samples. Recalling the definition of margins from Section~\ref{sec:overview}, we show the following relation between $\Loss_\Dist(f_{S,\Boots})$ and $\Loss^{1/3}_\Dist(g_S)$:
\begin{lemma}
  \label{lem:lossG}
  For every $0 < \delta < 1$ and training set $S$ of $m$ samples, if $t \geq  18 \ln(m/\delta)$, then it holds with probability at least $1-\delta$ over $\Boots \sim [m]^{n \times t}$ that
  \[
    \Loss_\Dist(f_{S,\Boots}) \leq \Loss^{1/3}_\Dist(g_S) + 1/m.
  \]
\end{lemma}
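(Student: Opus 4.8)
The plan is to fix an arbitrary training set $S$ of $m$ samples and partition the input domain into the ``good'' part $G = \{x \in \Xs : g_S(x)c(x) > 1/3\}$ and its complement. On the complement, we simply bound the contribution by $\Loss^{1/3}_\Dist(g_S)$: whenever $f_{S,\Boots}$ misclassifies an $\rx$ that lies outside $G$, that $\rx$ already contributes to $\Loss^{1/3}_\Dist(g_S)$, so the probability mass of such events is at most $\Loss^{1/3}_\Dist(g_S)$ regardless of $\Boots$. The real work is to show that, with probability at least $1-\delta$ over $\Boots$, the classifier $f_{S,\Boots}$ almost never errs on points of $G$; we will absorb this into the additive $1/m$ term.

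First I would observe that for any fixed $x \in G$, writing $\rI_1,\dots,\rI_t$ for the $t$ i.i.d.\ uniform draws from $[m]^n$ comprising $\Boots$, the random variables $h_{S(\rI_i)}(x)c(x) \in \{-1,1\}$ are i.i.d.\ with mean exactly $g_S(x)c(x) > 1/3$, since $g_S$ is by definition the average of $h_{S(I)}(x)$ over all $I \in [m]^n$. Then $f_{S,\Boots}(x)c(x) = (1/t)\sum_i h_{S(\rI_i)}(x)c(x)$, and $f_{S,\Boots}$ errs on $x$ exactly when this average is $\le 0$, i.e.\ when it deviates below its mean by more than $1/3$. A Hoeffding/Chernoff bound for the sum of $t$ independent $\pm 1$ variables gives $\Pr_{\Boots}[f_{S,\Boots}(x)c(x) \le 0] \le \exp(-t/18)$ (using the standard two-sided Hoeffding bound $\exp(-2t(1/6)^2)$, or a multiplicative Chernoff bound tuned to the gap $1/3$; the constant $18$ in the hypothesis $t \ge 18\ln(m/\delta)$ is exactly what makes the next step work). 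Plugging in $t \ge 18\ln(m/\delta)$ yields $\Pr_{\Boots}[f_{S,\Boots}(x)c(x) \le 0] \le \delta/m$ for every single $x \in G$.

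Next I would integrate this pointwise bound over $\rx \sim \Dist$ restricted to $G$. Let $\mathbf{p}_S(\Boots) := \Pr_{\rx \sim \Dist}[\,\rx \in G \ \wedge\ f_{S,\Boots}(\rx)c(\rx) \le 0\,]$. By Fubini/Tonelli, $\E_{\Boots}[\mathbf{p}_S(\Boots)] = \E_{\rx \sim \Dist}[\mathbf{1}[\rx \in G]\cdot \Pr_{\Boots}[f_{S,\Boots}(\rx)c(\rx)\le 0]] \le \delta/m$. Markov's inequality then gives $\Pr_{\Boots}[\mathbf{p}_S(\Boots) \ge 1/m] \le \delta$. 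On the complementary event (probability $\ge 1-\delta$), we have $\mathbf{p}_S(\Boots) < 1/m$, and combining with the trivial bound on $\Xs \setminus G$ we get $\Loss_\Dist(f_{S,\Boots}) \le \Pr_{\rx}[\rx \notin G] + \mathbf{p}_S(\Boots) \le \Loss^{1/3}_\Dist(g_S) + 1/m$, which is exactly the claim.

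The only subtlety worth flagging — and the step I expect to require the most care — is the Chernoff constant: we need the tail bound for the lower deviation of an average of $t$ i.i.d.\ $\pm 1$ variables with mean $> 1/3$ below $0$ to be at most $\exp(-t/18)$, so that $t \ge 18\ln(m/\delta)$ suffices. A clean way to get this is the multiplicative Chernoff bound applied to the count of correct votes, which has mean $\ge (1/2 + 1/6)t = (2/3)t$ and whose probability of falling to $\le t/2$ (a relative deviation of $1/4$ from the mean $2/3 \cdot t$, hence $\le \exp(-(2/3)t\cdot(1/4)^2/2) = \exp(-t/48)$), or more directly Hoeffding's inequality giving $\exp(-2t(1/6)^2) = \exp(-t/18)$; the latter matches the stated constant exactly, so that is the version I would use. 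Everything else is a routine application of linearity of expectation and Markov's inequality, and crucially the whole argument is for a \emph{fixed} $S$, so no union bound over training sets or over $\Xs$ is needed — the pointwise-then-integrate trick via Fubini is what lets us avoid a covering-number argument here.
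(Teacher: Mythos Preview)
Your proposal is correct and follows essentially the same approach as the paper's own proof: partition according to whether $g_S(\rx)c(\rx)\le 1/3$, apply Hoeffding pointwise on the ``good'' part to get $\exp(-t/18)\le \delta/m$, swap the order of expectation via Fubini, and finish with Markov's inequality. The only cosmetic difference is that the paper conditions on the event $\bar E$ and bounds the conditional error probability, whereas you work directly with the joint probability $\mathbf{p}_S(\Boots)$; both yield the same $+1/m$ slack.
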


\begin{proof}
  Consider a fixed training set $S$ of $m$ samples, a fixed set of bootstrap samples $B \in [m]^{n \times t}$ and an $\rx \sim \Dist$. Let $E$ denote the event that $g_S(\rx)c(\rx) \leq 1/3$ and $\bar{E}$ the complement event. Then
  \begin{eqnarray*}
    \Loss_\Dist(f_{S,B}) &=& \Pr_{\rx \sim \Dist}[E] \Pr_{\rx \sim \Dist(\cdot \mid E)}[f_{S,B}(\rx)c(\rx)\leq 0 ] +  \Pr_{\rx \sim \Dist}[\bar{E}] \Pr_{\rx \sim \Dist(\cdot \mid \bar{E})}[f_{S,B}(\rx)c(\rx)\leq 0 ] \\
                          &\leq& \Pr_{\rx \sim \Dist}[E]  + \Pr_{\rx \sim \Dist(\cdot \mid \bar{E})}[f_{S,B}(\rx)c(\rx)\leq 0 ] \\
    &=& \Loss_\Dist^{1/3}(g_S) + \Pr_{\rx \sim \Dist(\cdot \mid \bar{E})}[f_{S,B}(\rx) c(\rx)\leq 0].
  \end{eqnarray*}
By linearity of expectation, we have
\begin{eqnarray*}
  \E_{\Boots \sim [m]^{n \times t}}[ \Pr_{\rx \sim \Dist(\cdot \mid \bar{E})}[f_{S,\Boots}(\rx) c(\rx) \leq 0 ]] &=& 
  \E_{\rx \sim \Dist(\cdot \mid \bar{E})}[\Pr_{\Boots \sim [m]^{n \times t}}[f_{S,\Boots}(\rx) c(\rx) \leq 0 ]].
\end{eqnarray*}
Now observe that every $\rInd_i$ in $\Boots = (\rInd_1,\dots,\rInd_t)$ is a uniform random sample from $[m]^n$, thus for $x$ satisfying $g_S(x)c(x) > 1/3$, each $\rInd_i$ has $\E_{\rInd_i \sim [m]^n}[h_{S(\rInd_i)}(x) c(x)] > 1/3$. By independence, and using $h_{S(\rInd_i)}(x) \in \{-1,1\}$, it follows from Hoeffding's inequality that for every $x$ with $g_S(x)c(x) > 1/3$, we have $\Pr_{\Boots \sim [m]^{n \times t}}[f_{S,\Boots}(x)c(x) \leq 0] \leq \exp(-(1/3)^2t/2) = \exp(-t/18)$. Thus for $t \geq 18 \ln(m/\delta)$, we have $\E_{\Boots \sim [m]^{n \times t}}[ \Pr_{\rx \sim \Dist(\cdot \mid \bar{E})}[f_{S,\Boots}(\rx) c(\rx) \leq 0 ]] \leq \delta/m$. Markov's inequality implies that with probability at least $1-\delta$ over the choice of $\Boots$, we have $\Pr_{\rx \sim \Dist(\cdot \mid \bar{E})}[f_{S,\Boots}(\rx) c(\rx)  \leq 0] \leq 1/m$ and thus $\Loss_\Dist(f_{S,\Boots}) \leq \Loss^{1/3}_\Dist(g_S) + 1/m$.
\end{proof}

In light of Lemma~\ref{lem:lossG} it thus suffices to understand $\Loss^{1/3}(g_S)$ for the deterministic $g_S$, i.e. we need to understand how often a third of the hypotheses $h_{S(I)}$ with $I \in [m]^n$ mispredict the label of a sample $\rx \sim \Dist$. For this, we prove the following:
\begin{lemma}
  \label{lem:goodG}
  There is a universal constant $a>0$, such that for every $0 < \delta < 1$, it holds with probability at least $1-\delta$ over the choice of a training set $\rS \sim \Dist^m_c$ that
  \[
    \Loss^{1/3}_\Dist(g_\rS) \leq a \cdot \frac{d + \ln(1/\delta)}{m}
  \]
\end{lemma}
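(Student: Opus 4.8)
The plan is to carry out the bucketing argument outlined in Section~\ref{sec:overview}. Since the combining step will need a \emph{margin} version of Hanneke's guarantee, I would first replace his \textit{Sub-Sample} routine by the variant that splits $U$ into $20$ disjoint blocks and makes $19$ recursive calls, so that each recursive call contributes only a $1/19$ fraction of the hypotheses in the eventual majority vote and there is enough slack below margin $5/6$. Re-running his inductive argument with this modification (tuning the block sizes / recursion depth so that the produced sub-samples have size exactly the given $n$) yields, for the resulting collection $\HanI$ of index-vectors $J \in \binom{m}{n}$ and the corresponding majority voter $f$, the strengthened conclusion $\Loss^{5/6}_\Dist(f) \le a'(d + \ln(1/\delta_0))/m$ with probability at least $1-\delta_0$ over $\rS \sim \Dist_c^m$, for a universal constant $a'$. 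Now, for every permutation $\pi$ of $[m]$ define the bucket $\Buc_\pi = \{I \in \binom{m}{n} : \pi(I) \in \HanI\}$ and the voter $f_{\rS,\pi} = (1/|\Buc_\pi|)\sum_{I \in \Buc_\pi} h_{\rS(I)}$. Because the law $\Dist_c^m$ is invariant under permuting coordinates, for each fixed $\pi$ the joint distribution of $(\rS(I))_{I \in \Buc_\pi}$ equals that of the sub-samples output by \textit{Sub-Sample}$(\rS,\emptyset)$, so the bound above applies to every $f_{\rS,\pi}$.

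Next I would use symmetry: by construction every distinct-coordinate $I$ lies in the same number of buckets and all buckets have the same size, so a uniform $\rInd \sim \binom{m}{n}$ is distributed as ``pick a uniform bucket $\rBuc$, then a uniform $\rInd \in \rBuc$''; equivalently, the distinct-coordinate restriction of $g_\rS$ equals $\E_\pi[f_{\rS,\pi}]$ pointwise. Hence whenever $g_\rS(x)c(x) \le 1/3$ we get $\E_\pi[f_{\rS,\pi}(x)c(x)] \le 1/3$, and a reverse-Markov estimate gives that at least a constant fraction (about $3/11$) of the buckets $\pi$ satisfy $f_{\rS,\pi}(x)c(x) \le 5/6$. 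Averaging over $\rx \sim \Dist$ yields the pointwise-in-$\rS$ inequality $\Loss^{1/3}_\Dist(g_\rS) \le (11/3)\,\E_\pi[\Loss^{5/6}_\Dist(f_{\rS,\pi})]$, and it remains to control the right-hand side with high probability over $\rS$ with the correct $(d+\ln(1/\delta))/m$ dependence. This is the crux: a naive union bound over the $m!$ buckets would cost a spurious $\ln m$ factor, while a plain Markov step over $\rS$ would only give a $1/\delta$ dependence. My plan here is to exploit that each coordinate of $\rS$ influences only a $1-(1-1/m)^n$ fraction of the bootstrap samples $\rS(I)$ — a fraction bounded away from $1$ precisely because $n$ may be taken $\le m$, with the constant $0.02$ entering through how small the sub-samples must be made to realize $n$ in the allowed range — so that $\rS \mapsto \E_\pi[\Loss^{5/6}_\Dist(f_{\rS,\pi})]$ is stable enough to admit a concentration bound around its mean $O(d/m)$ (obtained by integrating Hanneke's tail), with an additive $O(\ln(1/\delta)/m)$ deviation rather than the much weaker bounds from Markov or McDiarmid alone; combined with the displayed inequality and absorbing constants into $a$, this gives Lemma~\ref{lem:goodG}.

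Finally I would remove the two idealizations from Section~\ref{sec:overview}. To pass from sampling without replacement to with replacement, for each value $k = |\Distinct(I)|$ I would build a separate symmetric family of buckets $\{\Buc^{(k)}_\pi\}_\pi$ exactly as above, so that a uniform $\rInd \sim [m]^n$ becomes a mixture over $k$ with weights $\Pr_\rInd[|\Distinct(\rInd)| = k]$; the argument of the previous paragraph then runs family by family and recombines by linearity. Index vectors $I$ with very small $|\Distinct(I)|$ (down to $k=1$) do not carry Hanneke's sub-sample structure, but their total mass under the uniform distribution on $[m]^n$ is exponentially small in $m$, hence negligible compared with $(d+\ln(1/\delta))/m$, so their contribution to the margins of $g_\rS$ — and therefore to $\Loss^{1/3}_\Dist(g_\rS)$ — can be charged to the failure probability or absorbed into $a$. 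I expect this bookkeeping, together with making the stability/concentration step precise (i.e.\ extracting $\ln(1/\delta)$ rather than $1/\delta$ or $\ln m$ from the passage from the per-bucket guarantee to a statement about $g_\rS$), to be the main obstacle; the probabilistic heart — Hanneke's induction inside each bucket, plus the bucket symmetry — is by comparison routine.
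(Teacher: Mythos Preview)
Your high-level architecture matches the paper's: the $20$-ary variant of \textit{Sub-Sample} to get the margin-$5/6$ guarantee, buckets indexed by permutations (one family per value of $|\Distinct(I)|$), the symmetry argument, and the handling of $I$'s with too few (or too many) distinct coordinates via a small-probability ``junk'' bucket. Your pointwise inequality $\Loss^{1/3}_\Dist(g_\rS) \le C\,\E_\pi[\Loss^{5/6}_\Dist(f_{\rS,\pi})]$ is also correct.

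The genuine gap is precisely at the step you flag as ``the crux.'' Your proposed fix --- a stability/concentration bound for $\rS \mapsto \E_\pi[\Loss^{5/6}_\Dist(f_{\rS,\pi})]$ --- is not what the paper does, and I do not see how to make it work: changing one coordinate of $\rS$ can flip $h_{\rS(I)}$ arbitrarily for every $I$ that contains that coordinate, and since $\Loss^{5/6}_\Dist$ is a threshold functional, the bounded-difference constant per coordinate is $\Theta(1)$, so you get nothing useful from McDiarmid or its relatives. The paper avoids concentration entirely by adding \emph{one more} reverse-Markov step before leaving the fixed-$S$ world. Concretely, conditioning on the event $E=\{g_S(\rx)c(\rx)\le 1/3\}$ and writing $Z(\Buc)=\Pr_{\rx\sim\Dist(\cdot\mid E)}[g_{S(\Buc)}(\rx)c(\rx)\le 5/6]$, your first reverse Markov gives $\E_{\rBuc}[Z(\rBuc)]\ge c_1$; a second reverse Markov over $\rBuc$ then yields $\Pr_{\rBuc}[Z(\rBuc)\ge c_1/2]\ge c_2$, which translates into: for every $S$ with $\Loss^{1/3}_\Dist(g_S)>24\tau$, a \emph{constant fraction} of buckets satisfy $\Loss^{5/6}_\Dist(g_{S(\Buc)})>\tau$. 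Now the key is that this is a statement about tail \emph{events}, not expectations, so
\[
\Pr_{\rS}\bigl[\Loss^{1/3}_\Dist(g_\rS)>24\tau\bigr]
\;\le\; c_2^{-1}\,\E_{\rS}\Pr_{\rBuc}\bigl[\Loss^{5/6}_\Dist(g_{\rS(\rBuc)})>\tau\bigr]
\;=\; c_2^{-1}\,\E_{\rBuc}\Pr_{\rS}\bigl[\Loss^{5/6}_\Dist(g_{\rS(\rBuc)})>\tau\bigr]
\;\le\; c_2^{-1}\max_{\Buc}\Pr_{\rS}\bigl[\Loss^{5/6}_\Dist(g_{\rS(\Buc)})>\tau\bigr],
\]
by Markov followed by Fubini and then bounding an average by a max. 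Setting $\tau=a'(d+\ln(c_2^{-1}/\delta))/m$ and invoking the per-bucket Hanneke guarantee with failure probability $c_2\delta$ gives the claimed $\ln(1/\delta)$ dependence directly --- no union bound over buckets, no concentration over $\rS$. This Markov--Fubini swap is the missing idea in your plan.
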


Before we prove Lemma~\ref{lem:goodG}, let us see that it suffices to finish the proof of Theorem~\ref{thm:main}:
\begin{proof}[Proof of Theorem~\ref{thm:main}]
Assume $t \geq 18 \ln(2m/\delta)$. Let $\rS \sim \Dist^m_c$ be a random training set of $m$ samples and $\Boots \sim [m]^{n \times t}$. Define the event $E_f$ that occurs if $\Loss_\Dist(f_{\rS,\Boots}) > \Loss^{1/3}_\Dist(g_\rS) + 1/m$ and define $E_g$ as the event that $\Loss^{1/3}_\Dist(g_\rS) > a(d + \ln(2/\delta))/m$, where $a>0$ is the constant from Lemma~\ref{lem:goodG}. By Lemma~\ref{lem:lossG} and the constraint on $t$, we have $\Pr[E_f] \leq  \delta/2$. By Lemma~\ref{lem:goodG}, we have $\Pr[E_g] \leq \delta/2$. By a union bound, it holds with probability at least $1-\delta$ that none of the events $E_f$ and $E_g$ occur. In this case, we have $\Loss_\Dist(f_{\rS,\Boots}) \leq \Loss^{1/3}_{\Dist}(g_\rS) + 1/m \leq  a \cdot \frac{d + \ln(2/\delta)}{m} + 1/m \leq (a+2)\frac{d + \ln(1/\delta)}{m}$ (using $\ln(2) \leq 1 \leq d$). We have thus proved Theorem~\ref{thm:main} with the constant $a+2$.
\end{proof}

\subsection{Inducing Structure}
What remains is thus to prove Lemma~\ref{lem:goodG}. To bound $\Loss^{1/3}(g_{\rS})$, we show that we can partition the hypotheses in $\{h_{\rS(I)} : I \in [m]^n\}$ into structured groups that are easier to analyse. Concretely, let $\Fam$ be a collection of \emph{buckets} $\Buc_0, \Buc_1,\dots,\Buc_N \subseteq [m]^n$. We think of each $I \in \Buc_i$ as specifying the indices of a bootstrap sample. A bucket is thus a collection of vectors of indices. 

The buckets $\Buc_i$ constituting $\Fam$ need not be disjoint. For a training set $S$ and a bucket $\Buc$, we let $g_{S(\Buc)}$ denote the voting classifier
\[
  g_{S(\Buc)}(x) = \frac{1}{|\Buc|}\sum_{I \in \Buc} h_{S(I)}(x).
\]
Together with the family $\Fam$, we have a probability distribution $\Pdist$ over the buckets $\Buc_0,\dots,\Buc_N$. Using $\Pdist(i)$ to denote $\Pr_{\rBuc \sim \Pdist}[\rBuc = \Buc_i]$, we thus have $\sum_{i=0}^N \Pdist(i) = 1$ and $\Pdist(i) \geq 0$ for all $i$. We call the pair $(\Fam,\Pdist)$ $\eps$-\emph{representative} if the following holds:
\begin{enumerate}
\item If we first draw a bucket $\rBuc$ according to the distribution $\Pdist$ and then draw a uniform  $\rInd \in \rBuc$, then $\rInd$ is uniform random in $[m]^n$.
\item The probability $\Pdist(0)$ is at most $\eps$.
\end{enumerate}

The following lemma shows that if we have an $\eps$-representative pair $(\Fam,\Pdist)$, then it suffices to show that each of the voting classifiers $g_{\rS(\Buc_i)}$ with $i \neq 0$ perform well on a random training set $\rS \sim \Dist_c^m$
\begin{lemma}
  \label{lem:structure}
  Let $(\Fam,\Pdist)$ be $1/6$-representative. Then for any $\tau \geq 0$, we have
  \[
    \Pr_{\rS \sim \Dist_c^m}[\Loss^{1/3}_\Dist(g_\rS) > 24 \cdot \tau] \leq 23 \cdot \max_{i \neq 0} \Pr_{\rS \sim \Dist_c^m}[ \Loss_\Dist^{5/6}(g_{
      \rS(\Buc_i)}) > \tau].
  \]
\end{lemma}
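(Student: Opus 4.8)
The plan is to exploit the defining property of an $\eps$-representative pair to express $g_S$ as a $\Pdist$-mixture of the bucket classifiers $g_{S(\Buc_i)}$, and then to do some margin bookkeeping. First I would observe that Property~1 says precisely that a uniform $\rInd\sim[m]^n$ has the same law as first drawing $\rBuc\sim\Pdist$ and then a uniform $\rInd\in\rBuc$; hence for every training set $S$ and every $x\in\Xs$,
\[
  g_S(x)=\E_{\rInd\sim[m]^n}\bigl[h_{S(\rInd)}(x)\bigr]
        =\E_{\rBuc\sim\Pdist}\Bigl[\E_{\rInd\text{ unif. in }\rBuc}\bigl[h_{S(\rInd)}(x)\bigr]\Bigr]
        =\sum_{i=0}^N \Pdist(i)\,g_{S(\Buc_i)}(x).
\]

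Next I would turn ``$g_S$ has small margin on $x$'' into ``many buckets have small margin on $x$''. Fix $S$ and an $x$ with $g_S(x)c(x)\le 1/3$. Using $g_{S(\Buc_0)}(x)c(x)\ge -1$ and $\Pdist(0)\le 1/6$, the mixture identity gives $\sum_{i\neq 0}\Pdist(i)\,g_{S(\Buc_i)}(x)c(x)\le 1/3+\Pdist(0)\le 1/2$. Letting $q$ denote the $\Pdist$-mass of $\{\,i\neq 0 : g_{S(\Buc_i)}(x)c(x)\le 5/6\,\}$ and bounding each such term below by $-\Pdist(i)$ and every other term with $i\neq 0$ below by $(5/6)\Pdist(i)$, I get $(5/6)(1-\Pdist(0))-(11/6)q\le 1/2$, which (using $1-\Pdist(0)\ge 5/6$) rearranges to $q\ge\bigl((5/6)^2-1/2\bigr)/(11/6)=7/66=:c_0$. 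In words: whenever $g_S$ has margin at most $1/3$ on $x$, the buckets with $i\neq0$ on which $g_{S(\Buc_i)}$ has margin at most $5/6$ carry $\Pdist$-mass at least $c_0$.

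Finally I would upgrade this per-point statement to the claimed bound over a random $\rS$ while paying only a constant factor, rather than a union bound over the (possibly enormous) number of buckets; this amortization is the step I expect to be the crux. Set $\beta:=\max_{i\neq 0}\Pr_{\rS\sim\Dist_c^m}\bigl[\Loss^{5/6}_\Dist(g_{\rS(\Buc_i)})>\tau\bigr]$ and, for a fixed training set $S$, let $W(S):=\sum_{i\neq 0}\Pdist(i)\,\one\!\bigl[\Loss^{5/6}_\Dist(g_{S(\Buc_i)})>\tau\bigr]$ be the $\Pdist$-mass of the ``bad'' buckets; linearity of expectation gives $\E_{\rS}[W(\rS)]=\sum_{i\neq 0}\Pdist(i)\Pr_{\rS}[\Loss^{5/6}_\Dist(g_{\rS(\Buc_i)})>\tau]\le\beta$. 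The key claim is that $W(S)<1/23$ forces $\Loss^{1/3}_\Dist(g_S)\le 24\tau$: restricting the counting above to the ``good'' buckets $P:=\{\,i\neq 0:\Loss^{5/6}_\Dist(g_{S(\Buc_i)})\le\tau\,\}$, for every $x$ with $g_S(x)c(x)\le 1/3$ the good buckets with $g_{S(\Buc_i)}(x)c(x)\le 5/6$ still carry $\Pdist$-mass at least $c_0-W(S)>c_0-1/23\ge 1/24$, so $\one[g_S(x)c(x)\le 1/3]\le 24\sum_{i\in P}\Pdist(i)\,\one[g_{S(\Buc_i)}(x)c(x)\le 5/6]$ holds pointwise; integrating over $\rx\sim\Dist$ and swapping the finite sum with the expectation gives $\Loss^{1/3}_\Dist(g_S)\le 24\sum_{i\in P}\Pdist(i)\Loss^{5/6}_\Dist(g_{S(\Buc_i)})\le 24\tau$. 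Taking the contrapositive, $\{\Loss^{1/3}_\Dist(g_\rS)>24\tau\}\subseteq\{W(\rS)\ge 1/23\}$, and Markov's inequality finishes the job: $\Pr_{\rS}[\Loss^{1/3}_\Dist(g_\rS)>24\tau]\le 23\,\E_{\rS}[W(\rS)]\le 23\beta$. The only mild subtleties are well-definedness of $g_{S(\Buc_i)}$ (the buckets must be nonempty, which holds for the families we construct) and verifying that the margin thresholds fit together, i.e. that the gap between $1/3$ and $5/6$ is small enough that $c_0=7/66$ comfortably exceeds $1/23$, which it does.
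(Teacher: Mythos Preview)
Your proof is correct. Both arguments share the same skeleton: use representativeness to write $g_S$ as a $\Pdist$-mixture of the $g_{S(\Buc_i)}$, deduce a per-$x$ lower bound on the $\Pdist$-mass of buckets with margin at most $5/6$, and then upgrade this to a per-$S$ statement by Markov over $\rS$ applied to the bucket mass $W(S)=\Pr_{\rBuc\sim\Pdist}[\Loss^{5/6}_\Dist(g_{S(\rBuc)})>\tau,\ \rBuc\neq\Buc_0]$. The difference is in how the implication ``$\Loss^{1/3}_\Dist(g_S)>24\tau\Rightarrow W(S)\ge 1/23$'' is obtained. The paper conditions on the event $E=\{g_S(\rx)c(\rx)\le 1/3\}$, defines $Z(\Buc)=\Pr_{\rx\sim\Dist(\cdot\mid E)}[g_{S(\Buc)}(\rx)c(\rx)\le 5/6,\ \Buc\neq\Buc_0]$, shows $\E_{\rBuc}[Z(\rBuc)]\ge 1/12$, and applies a reverse-Markov argument to $1-Z$ to extract a $1/23$-fraction of buckets with $Z(\Buc)\ge 1/24$, each of which then satisfies $\Loss^{5/6}_\Dist(g_{S(\Buc)})\ge\Loss^{1/3}_\Dist(g_S)/24$. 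You instead prove the contrapositive directly: assuming $W(S)<1/23$, you restrict to the ``good'' buckets $P$, note that the per-$x$ mass bound still gives $\Pdist$-mass $>1/24$ inside $P$, write a pointwise inequality $\one[g_S(x)c(x)\le 1/3]\le 24\sum_{i\in P}\Pdist(i)\,\one[g_{S(\Buc_i)}(x)c(x)\le 5/6]$, and integrate. Your route avoids the conditional distribution and the reverse-Markov step, and even yields a slightly sharper per-$x$ constant ($7/66$ versus the paper's $1/12$), though both land on the same final constants $23$ and $24$. The paper's route, on the other hand, produces the stronger intermediate statement that for \emph{every} $S$ a $1/23$-fraction of buckets satisfies $\Loss^{5/6}_\Dist(g_{S(\Buc)})\ge\Loss^{1/3}_\Dist(g_S)/24$, which is not needed here but could be useful elsewhere.
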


\begin{proof}[Proof of Lemma~\ref{lem:structure}]
Consider an arbitrary training set $S$ of $m$ samples and an $x \in \Xs$ for which $g_S(x)c(x) \leq 1/3$. If we let $\rInd$ be uniform random in $[m]^n$, then $\Pr_{\rInd \sim [m]^n}[h_{S(\rInd)}(x) \neq c(x)] \geq 1/3$. At the same time, since $(\Fam,\Pdist)$ is $1/6$-representative, $\rInd$ has the same distribution as if we first draw a bucket $\rBuc$ according to $\Pdist$, then sample a uniform $\rInd$ in $\rBuc$. Using $\rInd \sim \rBuc$ to denote that $\rInd$ is drawn uniformly from $\rBuc$, we thus have $\E_{\rBuc \sim \Pdist}[\Pr_{\rInd \sim \rBuc}[h_{S(\rInd)}(x)\neq c(x)]] \geq 1/3$. On the other hand,
  \begin{eqnarray*}
    \E_{\rBuc \sim \Pdist}[\Pr_{\rInd \sim \rBuc}[h_{S(\rInd)}(x)\neq c(x)]] &\leq& \\
    \Pr_{\rBuc \sim \Pdist}[\rBuc = \Buc_0] + \Pr_{\rBuc \sim \Pdist}[g_{S(\rBuc)}(x)c(x) \leq 5/6 \wedge \rBuc \neq \Buc_0] + \E_{\rBuc \sim \Pdist}[\Pr_{\rInd \sim \rBuc}[h_{S(\rInd)}(x)\neq c(x)] \mid g_{S(\rBuc)}(x)c(x) > 5/6]&\leq& \\
    \Pdist(0) + \Pr_{\rBuc \sim \Pdist}[g_{S(\rBuc)}(x)c(x) \leq 5/6 \wedge \rBuc \neq \Buc_0] + ( 1/2-5/12) &\leq& \\
    1/6 + \Pr_{\rBuc \sim \Pdist}[g_{S(\rBuc)}(x)c(x) \leq 5/6 \wedge \rBuc \neq \Buc_0] + 1/12.
  \end{eqnarray*}
  We thus have $\Pr_{\rBuc \sim \Pdist}[g_{S(\rBuc)}(x)c(x) \leq 5/6 \wedge \rBuc \neq \Buc_0] \geq 1/3 - 1/6 - 1/12 = 1/12$ for such $S,x$.

For a fixed $S$ and an $\rx \sim \Dist$, let $E$ denote the event that $g_S(\rx)c(\rx) \leq 1/3$. Then the above implies that
  \begin{eqnarray*}
    1/12 &\leq& \E_{\rx \sim \Dist(\cdot \mid E)}[\Pr_{\rBuc \sim \Pdist}[g_{S(\rBuc)}(\rx)c(\rx) \leq 5/6  \wedge \rBuc \neq \Buc_0]] \\
         &=&
             \E_{\rBuc \sim \Pdist}[\Pr_{\rx \sim \Dist(\cdot \mid E)}[g_{S(\rBuc)}(\rx)c(\rx) \leq 5/6  \wedge \rBuc \neq \Buc_0]] 
  \end{eqnarray*}
For short, let $Z(\Buc)$ take the value $\Pr_{\rx \sim \Dist(\cdot \mid E)}[g_{S(\rBuc)}(\rx)c(\rx) \leq 5/6  \wedge \Buc \neq \Buc_0]$ (which is trivially $0$ for $\Buc=\Buc_0$). Then the above is $\E_{\rBuc \sim \Pdist}[Z(\rBuc)] \geq 1/12$. Moreover, $1-Z(\rBuc)$ is non-negative and $\E_{\rBuc \sim \Pdist}[1-Z(\rBuc)] \leq 11/12$. Hence by Markov's inequality, we have $\Pr_{\rBuc \sim \Pdist}[Z(\rBuc) \leq 1/24] = \Pr_{\rBuc \sim \Pdist}[1-Z(\rBuc) \geq 23/24] \leq (11/12)/(23/24) = 22/23$. Thus $\Pr_{\rBuc \sim \Pdist}[Z(\rBuc) \geq 1/24] \geq 1/23$. Now observe that if $\Buc$ satisfies $Z(\Buc) \geq 1/24$, then $\Buc \neq \Buc_0$ and 
  \begin{eqnarray*}
    \Loss^{5/6}_\Dist(g_{S(\Buc)}) &=& \\
    \Pr_{\rx \sim \Dist}[g_{S(\Buc)}(\rx)c(\rx) \leq 5/6 ] &\geq& \\
    \Pr_{\rx \sim \Dist}[E] \cdot \Pr_{\rx \sim \Dist(\cdot \mid E)}[g_{S(\Buc)}(\rx)c(\rx) \leq 5/6 ]  &\geq& \\
    \Pr_{\rx \sim \Dist}[E] /24 &=& \\
    \Loss^{1/3}_\Dist(g_S)/24.
  \end{eqnarray*}
  We therefore conclude that
  \begin{eqnarray*}
    \Pr_{\rBuc \sim \Pdist}[ \Loss_\Dist^{5/6}(g_{S(\rBuc)}) \geq \Loss^{1/3}_\Dist(g_S)/24 \wedge \rBuc \neq \Buc_0] &\geq& \\
\Pr_{\rBuc \sim \Pdist}[Z(\rBuc) \geq 1/24] &\geq& \\
    1/23.
  \end{eqnarray*}
  This implies that for any $\tau \geq 0$ and any $S$ with $\Loss^{1/3}_\Dist(g_S) > 24 \cdot \tau$, we also have $\Pr_{\rBuc \sim \Pdist}[ \Loss_\Dist^{5/6}(g_{S(\rBuc)}) > \tau \wedge \rBuc \neq \Buc_0] \geq 1/23$. Therefore, by Markov's inequality
  \begin{eqnarray*}
    \Pr_{\rS \sim \Dist_c^m}[\Loss^{1/3}_\Dist(g_\rS) > 24 \cdot \tau] &\leq& \\
    \Pr_{\rS \sim \Dist_c^m}[\Pr_{\rBuc \sim \Pdist}[ \Loss_\Dist^{5/6}(g_{\rS(\rBuc)}) > \tau \wedge \rBuc \neq \Buc_0] \geq 1/23] &\leq& \\
    23 \cdot \E_{\rS \sim \Dist_c^m}[\Pr_{\rBuc \sim \Pdist}[ \Loss_\Dist^{5/6}(g_{\rS(\rBuc)}) > \tau \wedge \rBuc \neq \Buc_0]] &=& \\
    23 \cdot \E_{\rBuc \sim \Pdist}[\Pr_{\rS \sim \Dist_c^m}[ \Loss_\Dist^{5/6}(g_{\rS(\rBuc)}) > \tau \wedge \rBuc \neq \Buc_0]]  &\leq& \\
    23 \cdot \max_{i \neq 0} \Pr_{\rS \sim \Dist_c^m}[ \Loss_\Dist^{5/6}(g_{\rS(\Buc_i)}) > \tau].
  \end{eqnarray*}
\end{proof}

We next show that it is possible to construct a representative pair in which all buckets except $\Buc_0$ behave nicely

\begin{lemma}
  \label{lem:goodRep}
  There is a universal constant $a>0$, such that for any $m$ and $n$ with $0.02m \leq n \leq m$, there is a $1/6$-representative pair $(\Fam = \Buc_0,\dots,\Buc_N,\Pdist)$ satisfying that for every $0 < \delta < 1$, it holds that
  \[
    \max_{i \neq 0} \Pr_{\rS \sim \Dist_c^m}[\Loss^{5/6}_\Dist(g_{\rS(\Buc_i)}) > a(d + \ln(1/\delta))/m] < \delta.
  \]
\end{lemma}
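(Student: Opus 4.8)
The goal is to construct a $1/6$-representative pair $(\Fam, \Pdist)$ whose nonzero buckets all correspond to Hanneke-style sub-sampling structures, so that Hanneke's analysis (in the strengthened margin form) applies bucket-by-bucket. I will first handle the idealized case where bootstrap samples are drawn \emph{without} replacement (so $I$ ranges over vectors in $[m]^n$ with distinct entries), and then reduce the with-replacement case to it. For the without-replacement case, let $\HanI \subseteq \binom{m}{n}$ be the set of index-vectors whose corresponding sub-sample is one of those produced by the modified \textit{Sub-Sample} procedure (partitioning into $20$ parts with $19$ recursive calls, so that a majority in any sub-tree still leaves out a $1/20$-fraction of the data, which is what yields a $5/6$-margin guarantee rather than just a $0$-margin guarantee). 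For each permutation $\pi$ of $[m]$, define the bucket $\Buc_\pi = \{ I \in \binom{m}{n} : \pi(I) \in \HanI \}$, and let $\Pdist$ be uniform over these buckets. By symmetry of the permutation group, each fixed $I \in \binom{m}{n}$ lies in exactly the same number of buckets, and each bucket has the same size; hence condition~1 of $\eps$-representativeness (drawing a uniform bucket then a uniform $I$ inside it gives a uniform $I \in \binom{m}{n}$) holds. Since every bucket is a genuine Hanneke sub-sample collection, Hanneke's induction — which I am free to cite as established earlier and which I will need in the form $\Loss^{5/6}_\Dist(f_{\rS,\pi}) = O((d+\ln(1/\delta))/m)$ with probability $1-\delta$ over $\rS$ — applies to each $g_{\rS(\Buc_i)}$ separately, giving exactly the bound claimed in the lemma for $i \neq 0$. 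In this idealized case there is no exceptional bucket $\Buc_0$ at all (or it is empty), so $\Pdist(0) = 0 \leq 1/6$ trivially.

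**Handling sampling with replacement.** The complication is that with replacement, $\rInd \sim [m]^n$ is \emph{not} uniform over $\binom{m}{n}$; vectors with repeated entries occur, and the distinct-support size $|\Distinct(\rInd)|$ is itself random. The plan is to condition on this support size: for each value $k$ with (say) $k$ not too small, let $\Fam_k$ be a family of buckets built exactly as above but inside the set of $I \in [m]^n$ with $|\Distinct(I)| = k$ — i.e., for each such $I$ we look at the sub-sample on its $k$ distinct points and ask whether a permutation maps it into the Hanneke structure on $k$ points. Within a fixed $k$, the same symmetry argument gives condition~1 relative to the uniform distribution on $\{I : |\Distinct(I)| = k\}$. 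I then stitch the families together across $k$ by assigning each family $\Fam_k$ the probability mass $\Pr_{\rInd \sim [m]^n}[|\Distinct(\rInd)| = k]$ — this is the "non-uniform sampling over families" mentioned in the overview — so that the overall two-stage process reproduces the true uniform distribution on $[m]^n$. All vectors $I$ with $|\Distinct(I)|$ below a small threshold (down to $1$) get dumped into the single exceptional bucket $\Buc_0$; since $n \geq 0.02m$, a Chernoff/occupancy bound shows $\Pr_{\rInd \sim [m]^n}[|\Distinct(\rInd)| \text{ small}]$ is far below $1/6$, so condition~2 holds. One also needs that for the retained values of $k$, running ERM on a sub-sample of $k \geq \Omega(m)$ distinct points still gives the $O((d+\ln(1/\delta))/k) = O((d+\ln(1/\delta))/m)$ guarantee — which is fine since $k$ and $m$ are within a constant factor.

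**Main obstacle.** The delicate part is making the margin bookkeeping in Hanneke's induction actually deliver a $5/6$-margin statement (not just an error bound), and verifying that the $20$-way split suffices: in the inductive step, when $f_{(\rU,\rV)}$ has margin $\leq 5/6$ on $x$, one needs that enough hypotheses from \emph{other} recursive branches also err on $x$ so that the pairwise-rare-simultaneous-error argument still closes, and the fraction $1/2 - 1/20$ of "other-branch" hypotheses that can be forced to err must be large enough to drive $\Loss^{5/6}$ down. I would handle this by re-running Hanneke's inductive argument verbatim with the weaker hypothesis "$\Loss^{5/6}_\Dist(f_{i,(\rU,\rV)}) \leq a(d+\ln(1/\delta))/|\rU_i|$" in place of "$\Loss_\Dist(f_{i,(\rU,\rV)}) \leq \dots$", noting that a margin-$\leq 5/6$ error of the aggregate forces at least a $1/20 - $ something fraction of each branch's margin-failures to align, and re-deriving the conditional-error bound $\leq 1/200$ (which only needs $a$ large). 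A secondary subtlety is the exact integrality/power-of-$20$ handling of $n$ versus the Hanneke sub-sample size and the fact that $n$ is given to us rather than chosen — but the hypothesis $0.02m \leq n \leq m$ leaves enough slack to pad or truncate, and these are the "some care" details rather than conceptual barriers.
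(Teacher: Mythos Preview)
Your high-level strategy is the paper's: stratify $[m]^n$ by $k=|\Distinct(I)|$, dump extreme $k$ into $\Buc_0$, build a symmetric family of Hanneke-style buckets inside each stratum, weight strata by $\Pr_{\rInd\sim[m]^n}[|\Distinct(\rInd)|=k]$, and invoke a $20$-way, $5/6$-margin strengthening of Hanneke's induction bucket-by-bucket. That part is right.

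The gap is in the bucket construction itself. You define $\HanI\subseteq\binom{m}{n}$ as the index-vectors output by the modified \textit{Sub-Sample} run on $[m]$, and then take permutation-translates. But the $20$-way recursion on $m$ points produces sub-samples of one fixed size $\approx 18m/19$, so $\HanI$ is empty unless $n$ happens to be that size; for $n$ near $0.02m$ your ``pad or truncate'' would have to discard over $90\%$ of each sub-sample, which destroys exactly the property the induction needs (every hypothesis in branch $j\neq i$ is trained on \emph{all} of $L_i$). The same mismatch recurs in your with-replacement reduction: it is not clear what set of $I\in[m]^n$ with $|\Distinct(I)|=k$ you associate to a Hanneke sub-sample of the $k$ distinct points. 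This is not a ``some care'' detail; it is the one place where the construction has to be different from Hanneke's.

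The paper's fix is to decouple the recursion from both $n$ and $k$. It runs the $20$-way recursion not on $[m]$ or on $[k]$ but on a short list $L$ of $\ell=20^j\leq 0.01m$ indices, and at each leaf it does not output a single index-vector but rather the \emph{constraint set} $\{I\in[m]^n: |\Distinct(I)|=d,\ U\subseteq\Distinct(I),\ V\cap\Distinct(I)=\emptyset\}$, where $(U,V)$ is the include/exclude partition of $L$ accumulated along the root-to-leaf path. Because $|U|,|V|\leq\ell\leq 0.01m$ while $d\in[0.01m,0.9m]$, these constraints are always simultaneously satisfiable, so every leaf-set is nonempty regardless of $n$ --- no padding needed. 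Symmetry (over all ordered lists $L$) still gives condition~1 within each stratum, and the inductive proof (Lemma~\ref{lem:recursive}) runs on this $\ell$-index tree exactly as in Hanneke. Two smaller points you omit: the paper also places $|\Distinct(I)|>0.9m$ into $\Buc_0$ (needed so that the exclusion constraint $V\cap\Distinct(I)=\emptyset$ is satisfiable), and the $5/6$-margin counting in the inductive step is that a margin $\leq 5/6$ forces $\geq 1/12$ of hypotheses to err, leaving $\geq 1/12-1/19>1/33$ in the other $18$ branches.
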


Before proving Lemma~\ref{lem:goodRep}, let us see that it suffices to complete the proof of Lemma~\ref{lem:goodG}.
\begin{proof}[Proof of Lemma~\ref{lem:goodG}]
Let $(\Fam,\Pdist)$ be as in Lemma~\ref{lem:goodRep} and let $a$ be the constant in Lemma~\ref{lem:goodRep}. For any $0 < \delta < 1$, Lemma~\ref{lem:structure} givet us that
  \begin{eqnarray*}
    \Pr_{\rS \sim \Dist_c^m}[\Loss^{1/3}_\Dist(g_\rS) > 76 a(d + \ln(1/\delta))/m] &\leq& \Pr_{\rS \sim \Dist_c^m}[\Loss^{1/3}_\Dist(g_\rS) > 24 a(d + \ln(23/\delta))/m] \\
    &\leq& 23 \cdot \max_{i \neq 0} \Pr_{\rS \sim \Dist_c^m}[\Loss^{5/6}_\Dist(g_{\rS(\Buc_i)}) > a(d + \ln(23/\delta))/m] \\
                                                                                 &\leq& 23 \cdot (\delta/23) \\
    &=&\delta.
  \end{eqnarray*}
Thus we have proved Lemma~\ref{lem:goodG} with the constant $76 a$.
\end{proof}
In the next section, we define the pair $(\Fam,\Pdist)$ having the properties claimed in Lemma~\ref{lem:goodRep}.

\subsection{Representative Pair}
We now define a $1/6$-representative pair $(\Fam,\Pdist)$ with $\Fam = \Buc_0,\dots,\Buc_N$ for some $N$, such that $(\Fam,\Pdist)$ has the properties claimed in Lemma~\ref{lem:goodRep}.

We construct a number of buckets for each possible number of distinct values among the entries of a vector $I \in [m]^n$. Recall that we assume $0.02m \leq n \leq m$.

\paragraph{Bucket $\Buc_0$.}
We use the special bucket $\Buc_0$ to handle all vectors $I \in [m]^n$ with fewer than $0.01m$ distinct values among their entries as well as those $I$ with more than $0.9m$ distinct values. That is, if we let $\Distinct(I) \subseteq [m]$ denote the set of distinct values occurring in $I$, then $\Buc_0 = \{ I \in [m]^n : |\Distinct(I)| \notin [0.01m, 0.9m] \}$.

\paragraph{Remaining Buckets.}
For every integer $d \in \{\lceil 0.01m \rceil,\dots \lfloor 0.9m \rfloor \}$, we construct a number of buckets $\Buc_{d,1},\dots,\Buc_{d,N_d}$. The final set of buckets $\Fam$ is simply $\{\Buc_0\} \cup \left(\bigcup_d \bigcup_{i \in [N_d]} \{\Buc_{d,i}\}\right)$.

Let $\ell = 20^j$ be the largest power of $20$ such that $\ell \leq 0.01m$. For every $d$, we construct a bucket for every list $L$ of $\ell$ distinct indices in $[m]$.  We thus have $N_d=m(m-1)\cdots (m-\ell+1) = m!/(m-\ell)!$. We only add vectors $I \in [m]^n$ with $|\Distinct(I)|=d$ to these buckets. Given a list $L = i_1,\dots,i_\ell$ of distinct indices, invoking Algorithm~\ref{alg:buckets} as \textit{BuildBucket}$_d$($L,\emptyset,\emptyset$) constructs the bucket corresponding to $L$. 

\begin{algorithm}
  \DontPrintSemicolon
  \KwIn{List $L$ of $\ell=20^j$ distinct indices $i_1,\dots,i_\ell$ for an integer $j \geq 0$\\
    Set of indices $U \subseteq [m]$ that must be contained in $I$\\
    Set of indices $V \subseteq [m]$ that must be disjoint from $I$
  }
  \KwResult{Bucket $\Buc$ containing vectors $I \in [m]^n$}
  \eIf{$j = 0$}{
    $U \gets U \cup \{i_1\}$\\
    \Return{$\{ I \in [m]^n : |\Distinct(I)|=d \wedge U \subseteq \Distinct(I) \wedge V \cap \Distinct(I) = \emptyset \}$}
    }
    {
      Partition $L$ into $20$ consecutive groups $L_0,\dots,L_{19}$ of $20^{j-1}$ indices each.\\
      \Return{$\bigcup_{i=1}^{19}$BuildBucket$_d(L_0, U \cup (\bigcup_{j \in \{1,\dots,19\} \setminus \{i\}} L_j),V  \cup L_i)$} 
    }
  \caption{\textit{BuildBucket}$_d$($L,U,V$)}\label{alg:buckets}
\end{algorithm}

Examining the algorithm, we observe that the recursive procedure computes disjoint sets $U \subset L$ and $V \subset L$. Whenever the recursion bottoms out and we return a subset of $[m]^n$ in step 3 of Algorithm~\ref{alg:buckets}, we have $U \cup V = L$ and we output all $I \in [m]^n$ with  $|\Distinct(I)|=d$, $U \subseteq \Distinct(I)$ and $V \cap \Distinct(I)=\emptyset$. Since $|L|=\ell \leq 0.01m$ and $d \in \{\lceil 0.01m \rceil,\dots \lfloor 0.9m \rfloor \}$, it follows that the output is always non-empty as $|[m] \setminus V| \geq 0.99m \geq d$ and $d \geq 0.01m \geq |U|$.

\paragraph{Probability Distribution.}
We next argue that we can define a probability distribution $\Pdist$ over the buckets $\Buc_0,\dots,\Buc_N$ defined above, such that if we draw a bucket $\rBuc$ according to $\Pdist$ and then a uniform $\rInd$ from $\rBuc$, then the distribution of $\rInd$ is uniform in $[m]^n$ and $\Pdist(0) \leq 1/6$.

Consider first the set of buckets $\Buc_{d,1},\dots,\Buc_{d,N_d}$ added for a particular value of $d \in \{\lceil 0.01m \rceil,\dots \lfloor 0.9m \rfloor \}$. These buckets only contain vectors $I \in [m]^n$ with $|\Distinct(I)|=d$. By symmetry, since we have a bucket for every list $L$ of distinct indices $i_1,\dots,i_\ell$, we have that every $I$ with $|\Distinct(I)|=d$ is contained in equally many buckets $\Buc_{d,i}$, and every bucket contains equally many vectors $I$. Thus if we draw a uniform random bucket $\rBuc$ among $\Buc_{d,1},\dots,\Buc_{d,N_d}$ and then a uniform random $\rInd$ in $\rBuc$, then the distribution of $\rInd$ is uniform among all $I$ with $|\Distinct(I)|=d$.

For every such bucket $\Buc_{d,i}$, we now define the corresponding probability under $\Pdist$ to be $\Pr_{\rInd \sim [m]^n}[|\Distinct(\rInd)|=d] N_d^{-1}$. Similarly, we define $\Pdist(0)$ to equal $\Pr_{\rInd \sim [m]^n}[|\Distinct(\rInd)| \notin  [0.01m, 0.9m]]$. With these probabilities, we have that if we first draw a bucket $\rBuc$ according to $\Pdist$ and then draw a uniform random $\rInd$ from $\rBuc$, then the distribution of $\rInd$ is uniform in $[m]^n$ as required.

\begin{lemma}
  \label{lem:sizesample}
  For the above $(\Fam,\Pdist)$, if $m \geq 100$, then it holds that $\Pdist(0) \leq 1/6$.
\end{lemma}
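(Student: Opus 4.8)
The plan is to bound the two tails of the distinct-count separately. Recall from the construction that
\[
  \Pdist(0) = \Pr_{\rInd \sim [m]^n}\big[\,|\Distinct(\rInd)| \notin [0.01m,\,0.9m]\,\big],
\]
where $\rInd = (\rInd_1,\dots,\rInd_n)$ has independent uniform entries in $[m]$ and $0.02m \le n \le m$. Writing $p_{\mathrm{lo}} := \Pr[\,|\Distinct(\rInd)| < 0.01m\,]$ and $p_{\mathrm{hi}} := \Pr[\,|\Distinct(\rInd)| > 0.9m\,]$, the goal is to show $p_{\mathrm{lo}} + p_{\mathrm{hi}} \le 1/6$ whenever $m \ge 100$.

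For the upper tail I would pass to the complementary count $\rY := m - |\Distinct(\rInd)|$, the number of indices in $[m]$ hit by no $\rInd_i$. Then $\E[\rY] = m(1-1/m)^n \ge m(1-1/m)^m \ge m/4$, using $n \le m$ together with the fact that $(1-1/m)^m$ is increasing in $m$ with value $1/4$ at $m=2$. Since $|\Distinct(\rInd)| > 0.9m$ is exactly the event $\rY < 0.1m$, and $0.1m \le \tfrac{2}{5}\E[\rY]$, a standard concentration bound gives $p_{\mathrm{hi}} \le \exp(-\Omega(\E[\rY])) = \exp(-\Omega(m))$. Concretely, $\rY$ changes by at most one when a single coordinate $\rInd_i$ changes, so McDiarmid's bounded-differences inequality yields $p_{\mathrm{hi}} \le \exp\!\big(-2(\E[\rY]-0.1m)^2/n\big) \le \exp\!\big(-2(0.15m)^2/m\big) = \exp(-0.045m) \le e^{-4.5} < 1/80$ for $m \ge 100$. (The same bound follows from the negative association of the indicators $\one[j \notin \{\rInd_1,\dots,\rInd_n\}]$ and a multiplicative Chernoff bound, if one prefers to avoid McDiarmid.)

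For the lower tail, concentration alone is too blunt: $\E[\,|\Distinct(\rInd)|\,] = m(1-(1-1/m)^n)$ is only about $(1-e^{-0.02})m \approx 0.0198m$ at $n = 0.02m$, so the gap down to $0.01m$ is merely $\Theta(m)$ against fluctuations of order $\sqrt{n} = \Theta(\sqrt{m})$, and a bounded-differences bound would only become useful for $m \gtrsim 10^4$. Instead I would use a union bound over small supports: if $|\Distinct(\rInd)| < 0.01m$ then all $n$ samples land in some fixed subset of $[m]$ of size $k := \lceil 0.01m\rceil - 1$. If $k = 0$ this is impossible since $n \ge 1$, so assume $k \ge 1$; then
\[
  p_{\mathrm{lo}} \;\le\; \binom{m}{k}\Big(\frac{k}{m}\Big)^{n} \;\le\; \Big(\frac{em}{k}\Big)^{k}(0.01)^{0.02m},
\]
using $k/m < 0.01$ and $n \ge 0.02m$. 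Since $k \mapsto (em/k)^{k}$ is increasing for $k \le m/e$, it is at most $(100e)^{0.01m}$, so $p_{\mathrm{lo}} \le \big((100e)^{0.01}(0.01)^{0.02}\big)^{m} = e^{0.01(1-\ln 100)m} \le e^{-0.036m} \le e^{-3.6} < 1/30$ for $m \ge 100$. Combining, $\Pdist(0) = p_{\mathrm{lo}} + p_{\mathrm{hi}} \le 1/30 + 1/80 < 1/6$.

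The only delicate point is keeping the numerical constants honest so that everything clears at the boundary $m = 100$; the exponential slack in both tail bounds (and the fact that at $m = 100$ one has $0.01m = 1$, so $k = 0$ and the lower tail is literally $0$) makes this routine rather than a genuine obstacle. The conceptual content is just the asymmetry between the two tails: the upper tail is a clean concentration statement about the number of unhit bins, whereas the lower tail is governed not by concentration but by the tiny probability that $n = \Theta(m)$ independent samples all avoid a $0.99$-fraction of $[m]$, which is what the union bound captures.
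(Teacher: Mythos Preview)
Your proof is correct. The decomposition into the two tails and your treatment of the lower tail via a union bound over size-$k$ supports are essentially identical to the paper's argument, yielding the same bound $p_{\mathrm{lo}} \le (100e)^{0.01m}(0.01)^{0.02m} \le (e/100)^{0.01m}$. The genuine difference is in the upper tail: the paper handles $|\Distinct(\rInd)| > 0.9m$ by a second union bound, this time over subsets $Q$ of size $0.9m$ with $Q \subseteq \Distinct(\rInd)$, counting the number of $I$ containing a fixed $Q$ via Stirling's approximation to obtain $p_{\mathrm{hi}} \le 6e^{-0.2m}$. Your route via the missing-bin count $\rY = m - |\Distinct(\rInd)|$ and McDiarmid is cleaner and more conceptual---it avoids the somewhat ad hoc counting and Stirling manipulations---at the modest cost of a weaker exponent ($e^{-0.045m}$ versus $e^{-0.2m}$), which is irrelevant here since both clear $1/6$ with ample room at $m = 100$. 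The paper's approach has the minor virtue of being entirely self-contained (no concentration inequality invoked), while yours makes the probabilistic structure more transparent and would scale more gracefully if the constants $0.02$ and $0.9$ were tightened.
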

The proof is a standard counting argument for sampling with replacement and can be found in the appendix.

In the following section, we prove that the pair $(\Fam,\Pdist)$ constructed above satisfies the properties in Lemma~\ref{lem:goodRep} (if $m < 100$, we do not need to prove anything as $\Pr_{\rS \sim \Dist^m_c}[\Loss^{5/6}_\Dist(g_{\rS(\Buc_i)}) > a(d + \ln(1/\delta))/100]$ is trivially $0$ provided $a \geq 100$).

\subsection{Performance in a Bucket}
What remains is to understand the behaviour of a voting classifier constructed from the indices in a bucket, i.e. we complete the proof of Lemma~\ref{lem:goodRep}. As mention above, we may assume $m \geq 100$ as otherwise the claim in Lemma~\ref{lem:goodRep} is trivially true for $a \geq 100$. The proof follows that of Hanneke rather closely.

For the $1/6$-representative pair $(\Fam=\Buc_0,\dots,\Buc_N,\Pdist)$ defined in the previous section, we consider the buckets $\Buc$ with $\Buc \neq \Buc_0$. By construction, any such bucket was obtained from a choice of $d = |\Distinct(I)|$ and a list $L=i_1,\dots,i_\ell$ of $\ell = 20^j$ distinct indices. So consider a bucket $\Buc$ corresponding to some $d \in \{\lceil 0.01m \rceil, \dots,\lfloor 0.9 m\rfloor \}$ and a list $L$. We show that for any distribution $\Dist$, with high probability over the choice of $\rS \sim \Dist_c^m$, we have $\Loss^{5/6}_\Dist(g_{\rS(\Buc)}) \leq a(d + \ln(1/\delta))/m$.

Recall that from $L$ and $d$, we construct $\Buc$ by invoking Algorithm~\ref{alg:buckets} as \textit{BuildBucket}$_d$($L,\emptyset,\emptyset$). Now consider the recursion tree $\Tree$ corresponding to the invocation of \textit{BuildBucket}$_d$($L,\emptyset,\emptyset$). The root corresponds to the initial call \textit{BuildBucket}$_d$($L,\emptyset,\emptyset$). Each internal node has $19$ children $w_1,\dots,w_{19}$, one for each of the $19$ recursive calls \textit{BuildBucket}$_d$($L_0, U \cup (\bigcup_{j \in \{1,\dots,19\} \setminus \{i\} } L_j ),V \cup L_i$). We associate with each node $v \in \Tree$ the sets $L_v, U_v$ and $V_v$ corresponding to the arguments given as parameters to the corresponding call to \textit{BuildBucket}. Similarly, for every node $v \in \Tree$, we define $\Sub_v$ as the collection of all $I \in [m]^n$ added to the output $\Buc$ during recursive calls starting at $v$. That is, if $v$ is a leaf, then $\Sub_v = \{I \in [m]^n  : \Distinct(I)=d \wedge (U_v \cup \{i_1\}) \subseteq \Distinct(I) \wedge V_v \cap \Distinct(I)=\emptyset\}$ and if $v$ is internal with children $w_1,\dots,w_{19}$, then $\Sub_v = \cup_{i=1}^{19} \Sub_{w_i}$.

With these definitions, we consider the intermediate voting classifiers obtained from the sets $\Sub_v$:
\[
  g_{S(\Sub_v)}(x) = \frac{1}{|\Sub_v|} \sum_{I \in \Sub_v} h_{S(I)}(x).
\]
Importantly, if $r$ is the root of $\Tree$, then $g_{S(\Sub_r)} = g_{S(\Buc)}$.

For a node $v \in \Tree$, we say that its height is the distance from $v$ to the leaves in its subtree. Hence a leaf has height $0$ and the root has height $j$. We prove the following
\begin{lemma}
  \label{lem:recursive}
  There is a universal constant $a>0$ such that for any distribution $\Dist$, any $0 < \delta < 1$, it holds for every node $v \in \Tree$ of height $\kappa \in \{0,\dots,j\}$ that
  \[
    \Pr_{\rS \sim \Dist_c^m}[\Loss^{5/6}_\Dist(g_{\rS(\Sub_v)}) > a (d + \ln(1/\delta))/20^\kappa] < \delta.
  \]
\end{lemma}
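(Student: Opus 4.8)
**The plan is to prove Lemma~\ref{lem:recursive} by induction on the height $\kappa$, mirroring Hanneke's argument but with $20$-way splits instead of $4$-way and tracking the strengthened $5/6$-margin loss.** The base case $\kappa = 0$: a leaf $v$ has $\Sub_v = \{I : |\Distinct(I)| = d \wedge (U_v \cup \{i_1\}) \subseteq \Distinct(I) \wedge V_v \cap \Distinct(I) = \emptyset\}$, so every hypothesis $h_{\rS(I)}$ with $I \in \Sub_v$ is trained on a bootstrap sample that contains at least the fixed index $i_1$, and more importantly every such sample omits the $V_v$ indices. But actually for the base case the key point is different: $g_{\rS(\Sub_v)}$ is a voting classifier all of whose constituent hypotheses are ERM outputs on samples containing at least $d \geq 0.01m$ distinct training points. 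By the classical realizable PAC bound, each $h_{\rS(I)}$ has $\Loss_\Dist(h_{\rS(I)}) = O((d + \ln(1/\delta))/m)$ with probability $1-\delta$; a union bound over the (at most $m^n$) indices $I$, or better a uniform-convergence / growth-function argument applied to the whole class $\{h_{\rS(I)}\}$, gives that \emph{simultaneously} all of them are good, and since a majority vote of functions each of which errs with probability $p$ has $\Loss^{5/6}_\Dist \leq$ (something like) $12p$ by Markov on the fraction-of-errors random variable, we get $\Loss^{5/6}_\Dist(g_{\rS(\Sub_v)}) = O((d+\ln(1/\delta))/m) = O((d+\ln(1/\delta))/20^0)$ as needed, for $a$ large enough. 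I will need to be careful that $20^\kappa \leq 20^j = \ell \leq 0.01m$, so $20^\kappa/m \leq 0.01$ and the bound $a(d+\ln(1/\delta))/20^\kappa$ is only nontrivial (i.e. $< $ something usable) in a controlled range — and as in Hanneke's writeup, whenever the claimed bound exceeds, say, $1/2$, the statement is vacuous since $\Loss^{5/6}_\Dist \leq 1$ always.

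For the inductive step, let $v$ be internal of height $\kappa$ with children $w_1,\dots,w_{19}$ of height $\kappa-1$, so $\Sub_v = \bigcup_{i=1}^{19}\Sub_{w_i}$ and $g_{\rS(\Sub_v)} = \frac{1}{19}\sum_{i=1}^{19} g_{\rS(\Sub_{w_i})}$ (using that all $\Sub_{w_i}$ have equal cardinality by symmetry of the recursion). By the induction hypothesis applied to each child with failure probability $\delta/19$ (absorbed into $a$ via $\ln(19/\delta) = O(\ln(1/\delta) + d)$... careful, need $\ln 19 \leq d$ or just fold into the constant), each $g_{\rS(\Sub_{w_i})}$ has $\Loss^{5/6}_\Dist \leq a(d+\ln(1/\delta))/20^{\kappa-1} = 20a(d+\ln(1/\delta))/20^\kappa$ except with probability $\delta/19$. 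Now the crucial structural observation: the child $w_i$ corresponds to the call with $U \cup (\bigcup_{j \neq i} L_j)$ and $V \cup L_i$, so \emph{every} $I \in \Sub_{w_i}$ has $\Distinct(I)$ disjoint from $L_i$, i.e. all hypotheses in $g_{\rS(\Sub_{w_i})}$ are trained on samples avoiding the indices $L_i$; meanwhile every $I$ in any $\Sub_{w_j}$ with $j \neq i$ has $L_i \subseteq \Distinct(I)$, hence $h_{\rS(I)}$ is trained on a sample \emph{containing} all of $L_i$ and (realizable setting) agrees with $c$ on those $|L_i| = 20^{\kappa-1}$ training points. Conditioned on $\rS$ restricted outside $L_i$, the classifier $g_{\rS(\Sub_{w_i})}$ is determined, and the $|L_i|$ points indexed by $L_i$ are i.i.d.\ from $\Dist_c$; a standard argument (this is exactly Hanneke's step, via the realizable PAC bound on the conditional distribution $\Dist(\cdot \mid g_{\rS(\Sub_{w_i})} \text{ has margin} \leq 5/6)$) shows that with high probability, for each $j \neq i$ and each $I \in \Sub_{w_j}$, $h_{\rS(I)}$ errs on an $\rx$ drawn from that conditional distribution with probability at most, say, $1/200$, provided $|L_i| = 20^{\kappa-1} \geq a'(d + \ln(1/\delta))$ for a suitable constant — and here the $20$-way branching (vs.\ $4$-way) is exactly what buys enough room in the constants so that $5/6$ rather than $0$ can be the margin threshold.

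**The main obstacle is the combinatorial/margin bookkeeping that turns ``each pair $(g_{\rS(\Sub_{w_i})}, h_{\rS(I)})$ rarely co-errs'' into ``$g_{\rS(\Sub_v)}$ rarely has margin $\leq 5/6$''.** Concretely: if $g_{\rS(\Sub_v)}(x)c(x) \leq 5/6$, then at least a $1/12$ fraction of the $19|\Sub_{w_1}|$ hypotheses err on $x$; by pigeonhole some child $w_i$ has $g_{\rS(\Sub_{w_i})}(x)c(x) \leq 5/6$ too (need to check the arithmetic — a $1/12$ global error fraction forces some single-child error fraction at least $1/12$, hence margin $\leq 1 - 2/12 = 5/6$, which just barely works); but simultaneously, among the other $18$ children there are many ($\geq$ some constant fraction of all) hypotheses that also err on $x$, and those are exactly the $h_{\rS(I)}$ with $L_i \subseteq \Distinct(I)$ which we argued almost never co-err with $g_{\rS(\Sub_{w_i})}$. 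Integrating over $\rx \sim \Dist$ and using $\Loss^{5/6}_\Dist(g_{\rS(\Sub_{w_i})}) \leq 20a(d+\ln(1/\delta))/20^\kappa$ plus the conditional error bound $\leq 1/200$, one gets $\Loss^{5/6}_\Dist(g_{\rS(\Sub_v)}) \leq (\text{const} \cdot 20a/200 + \text{lower order})(d+\ln(1/\delta))/20^\kappa$, and choosing the constants so that $\text{const}\cdot 20/200 < 1$ closes the induction. I expect the delicate parts to be: (i) verifying the precise fraction thresholds so that the numbers $5/6$, $1/12$, $1/200$, $19$, $20$ all fit together with room to spare; (ii) correctly handling the conditioning — that $g_{\rS(\Sub_{w_i})}$ depends only on $\rS$ outside the $L_i$-coordinates while the $L_i$-coordinates remain fresh i.i.d.\ samples that the ERM on each $\Sub_{w_j}$-sample fits perfectly — which requires that $L_i$ as a \emph{set of indices into $\rS$} is fixed (it is, depending only on $L$ and the tree, not on $\rS$); and (iii) a uniform-convergence step so the realizable PAC guarantee holds simultaneously for all the (at most $m^n$, but really governed by the growth function $\prod$-over-classes) hypotheses $h_{\rS(I)}$ indexed within a child, paying only an additive $\ln(\cdot)$ that is swallowed by $d + \ln(1/\delta)$.
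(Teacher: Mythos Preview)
Your proposal is correct and follows essentially the same inductive structure as the paper's proof (induction on height, with the key step being that $g_{\rS(\Sub_{w_i})}$ is determined by $\rS$ outside $L_i$ while every $h_{\rS(I)}$ from a sibling $w_j$ is consistent on $\rS(L_i)$). Two simplifications relative to your write-up: the base case $\kappa=0$ is vacuous because $a(d+\ln(1/\delta))/20^0 \geq a \geq 1 \geq \Loss^{5/6}_\Dist$ for $a\geq 1$, so no PAC bound is needed there; and your concern (iii) is handled directly by the standard realizable VC bound for $\Hyp$ (the paper's Corollary~\ref{cor:vc})---since every $h_{\rS(I)}$ with $I\in\Sub_{w_j}$, $j\neq i$, lies in $\Hyp$ and is consistent on $\rS(L_i)\cap E_{\rS,i}$, the single uniform-over-$\Hyp$ guarantee covers all of them simultaneously with no union bound over $I$ and no product-class growth function.
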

Using Lemma~\ref{lem:recursive}, we now prove Lemma~\ref{lem:goodRep}:
\begin{proof}[Proof of Lemma~\ref{lem:goodRep}]
Let $a$ be the constant in Lemma~\ref{lem:recursive}. Using that $g_{S(\Sub_r)} = g_{S(\Buc)}$ when $r$ is the root of $\Tree$ shows that $\Pr_{\rS \sim \Dist_c^m}[\Loss^{5/6}_\Dist(g_{\rS(\Buc)}) > a(d + \ln(1/\delta))/20^j] < \delta$. Using that $20^j$ was defined to be the largest power of $20$ such that $20^j \leq 0.01m$, we have $20^j \geq m/2000$. Thus $\Pr_{\rS \sim \Dist_c^m}[\Loss^{5/6}_\Dist(g_{\rS(\Buc)}) > 2000 \cdot a(d + \ln(1/\delta))/m] < \delta$. Since this holds for all buckets $\Buc \neq \Buc_0$, we have thus proved Lemma~\ref{lem:goodRep} with the constant $2000 a$.
\end{proof}

What remains is to prove Lemma~\ref{lem:recursive}. We prove lemma by induction in $\kappa$. For the base case $\kappa=0$, the proof is trivial as we always have  $\Loss^{5/6}_\Dist(g_{\rS(\Sub_v)}) \leq 1 < a (d + \ln(1/\delta))$ provided $a \geq 1$.

For the inductive step, assume the lemma holds for all $\kappa' < \kappa$. Now consider a fixed value of $0 < \delta < 1$, a distribution $\Dist$ and a fixed node $v \in \Tree$ of height $\kappa$ with children $w_1,\dots,w_{19}$. To complete the inductive step, we define a number of undesirable properties of a training set $S$ and show that a random $\rS$ rarely has any of these properties. The first property we define is the following
\begin{itemize}
\item For $i=1,\dots,19$, let $\sSet_i$ denote the set of all $S$ for which $\Loss^{5/6}_\Dist(g_{S(\Sub_{w_i})}) > a (d + \ln(57/\delta))/20^{\kappa-1}$.
\end{itemize}
Here we trivially have
\begin{lemma}
  \label{lem:p1}
For any $i$, we have $\Pr_{\rS \sim \Dist_c^m}[\rS \in \sSet_i] \leq \delta/57$.
\end{lemma}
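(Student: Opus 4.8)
The plan is to obtain Lemma~\ref{lem:p1} as an immediate corollary of the induction hypothesis underlying the proof of Lemma~\ref{lem:recursive}. The node $v$ we are analysing has height $\kappa$, so each of its $19$ children $w_1,\dots,w_{19}$ has height $\kappa-1 < \kappa$; hence the induction hypothesis is available for every $w_i$. The only point requiring care is that the induction hypothesis is universally quantified over the confidence parameter, and we want to instantiate it not at $\delta$ but at the rescaled value $\delta/57$.

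Concretely, I would fix $i \in \{1,\dots,19\}$, observe that $0 < \delta < 1$ implies $0 < \delta/57 < 1$, and apply the induction hypothesis to the node $w_i$ of height $\kappa-1$ with confidence parameter $\delta/57$. This yields
\[
  \Pr_{\rS \sim \Dist_c^m}\bigl[\Loss^{5/6}_\Dist(g_{\rS(\Sub_{w_i})}) > a(d + \ln(57/\delta))/20^{\kappa-1}\bigr] < \delta/57 .
\]
Since the event inside this probability is, by the definition given just above the lemma, exactly the event $\rS \in \sSet_i$, we conclude $\Pr_{\rS \sim \Dist_c^m}[\rS \in \sSet_i] < \delta/57$, which in particular is $\le \delta/57$ as claimed.

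There is no real obstacle here; this is the routine bookkeeping step, and it is rightly flagged as ``trivial''. The factor $57$ is chosen with foresight: in the rest of the inductive step one defines a constant number of further bad events besides $\sSet_1,\dots,\sSet_{19}$ (those encoding the Hanneke-style observation that, conditioned on one child's voting classifier erring, hypotheses trained from the other children's bootstrap samples almost never err), and one wants a single union bound over all of these events to cost at most $\delta$ in total. Allocating $\delta/57$ to each $\sSet_i$, and comparably small budgets to the remaining events with the overall allocation designed to sum to $\delta$, is precisely what makes that final union bound go through. The genuinely difficult work in proving Lemma~\ref{lem:recursive} lies in those other bad events and in combining them to control $\Loss^{5/6}_\Dist(g_{\rS(\Sub_v)})$ at the parent $v$, not in Lemma~\ref{lem:p1}.
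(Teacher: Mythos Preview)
Your proposal is correct and is exactly the paper's approach: the paper's proof is the single line ``The lemma follows immediately from the induction hypothesis,'' and you have simply spelled out that instantiation (child $w_i$ of height $\kappa-1$, confidence parameter $\delta/57$). Your remark that $57 = 3 \cdot 19$ is the right count for the union bound over $\sSet_i,\sSet_i',\sSet_i''$ is also accurate.
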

\begin{proof}
  The lemma follows immediately from the induction hypothesis.
\end{proof}

Next, for a child $w_i$ and a training set $S$, let $E_{S,i} \subseteq \Xs \times \{-1,1\}$ denote the subset of all $(x,c(x))$ with $x \in \Xs$ for which $g_{S(\Sub_{w_i})}(x)c(x) \leq 5/6$. Also for the node $v$ let $L_0,\dots,L_{19}$ be the partitioning of $L_v$ by \textit{BuildBucket}. The second property is
\begin{itemize}
\item For $i=1,\dots,19$, let $\sSet_i'$ denote the set of all training sets $S$ for which $\Loss^{5/6}_\Dist(g_{S(\Sub_{w_i})}) \geq (a/627)(d + \ln(1/\delta))/20^\kappa$ and $|S(L_{i}) \cap E_{S.i}| < 10^{9}(d+\ln(57/\delta))$.
\end{itemize}
Note in the above that $L_{i}$ is a set of distinct indices and $S(L_{i})$ is the subset of samples in $S$ indexed by $L_{i}$. The property intuitively states that $g_{S(\Sub_{w_i})}$ often has a margin less than $5/6$, yet we see few such samples in $S(L_{i})$.

For the third and last property, define $\Dist(\cdot \mid E_{S,i})$ for a set of samples $S$, as the distribution of an $\rx$ drawn from $\Dist$, conditioned on $(\rx,c(\rx)) \in E_{S,i}$. The final property is
\begin{itemize}
  \item For $i=1,\dots,19$, let $\sSet_i''$ denote the set of all training sets $S$ for which $|S(L_{i}) \cap E_{S,i}| \geq 10^{9}(d+\ln(57/\delta))$ and there is a hypothesis $h \in \Hyp$ such that $h(x)=c(x)$ for all $(x,c(x)) \in S(L_{i}) \cap E_{S,i}$ and $\Loss_{\Dist(\cdot \mid E_{S,i})}(h) > 1/62700$.
\end{itemize}

We first argue that every training set that does not lie in any of the sets $\sSet_i,\sSet_i',\sSet_i''$ behaves well
\begin{lemma}
  \label{lem:goodSsuffice}
  Let $S$ satisfy $S \notin \bigcup_i \{\sSet_i \cup \sSet_i' \cup \sSet_i''\}$. Then
  \[
    \Loss^{5/6}_\Dist(g_{S(\Sub_v)}) \leq a(d + \ln(1/\delta))/20^\kappa.
  \]
\end{lemma}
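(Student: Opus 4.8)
The plan is to argue by a direct counting/union argument at the node $v$, combining the three "good" properties with the structural fact that $\Sub_v = \bigcup_{i=1}^{19} \Sub_{w_i}$, exactly mirroring the inductive step in Hanneke's analysis but with the margin-$5/6$ threshold tracked throughout. Fix $S \notin \bigcup_i(\sSet_i \cup \sSet_i' \cup \sSet_i'')$ and take $\rx \sim \Dist$. For $g_{S(\Sub_v)}$ to have margin at most $5/6$ on $\rx$, at least a constant fraction of the hypotheses $h_{S(I)}$ with $I \in \Sub_v$ must err on $\rx$; since $\Sub_v$ is the disjoint-ish union of the $19$ sub-collections $\Sub_{w_i}$ of essentially equal size, there must be some child $w_i$ such that a noticeable fraction of $h_{S(I)}$ with $I \in \Sub_{w_i}$ err, i.e. $g_{S(\Sub_{w_i})}(\rx)c(\rx) \le 5/6$, so $(\rx,c(\rx)) \in E_{S,i}$. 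Hence $\{g_{S(\Sub_v)} \text{ has margin} \le 5/6\} \subseteq \bigcup_i E_{S,i}$ and it suffices to bound $\sum_i \Pr_{\rx\sim\Dist}[(\rx,c(\rx))\in E_{S,i}] = \sum_i \Loss^{5/6}_\Dist(g_{S(\Sub_{w_i})})$ — but that only gives roughly $19$ times the child bound, which is far too weak. The real gain must come from the cross-term argument.

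So the key step is: for each $i$, show $\Loss^{5/6}_\Dist(g_{S(\Sub_{w_i})})$ is actually \emph{tiny} unless we can exploit independence. Split into two cases. If $\Loss^{5/6}_\Dist(g_{S(\Sub_{w_i})}) < (a/627)(d+\ln(1/\delta))/20^\kappa$, this child contributes negligibly. Otherwise, since $S\notin\sSet_i$ we have the upper bound $\Loss^{5/6}_\Dist(g_{S(\Sub_{w_i})}) \le a(d+\ln(57/\delta))/20^{\kappa-1} = 20a(d+\ln(57/\delta))/20^{\kappa}$, and since $S\notin\sSet_i'$ we see at least $10^9(d+\ln(57/\delta))$ samples of $S(L_i)$ landing in $E_{S,i}$. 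Crucially, all hypotheses in $g_{S(\Sub_{w_i})}$ are trained on bootstrap samples whose index sets $I$ are disjoint from $L_i$ (because \textit{BuildBucket} adds $L_i$ to the "forbidden" set $V$ in the $i$-th recursive call, so $V_{w_i} \supseteq L_i$ and every $I \in \Sub_{w_i}$ has $\Distinct(I)\cap V_{w_i}=\emptyset$). Therefore $g_{S(\Sub_{w_i})}$ is independent of the labels... wait, in the realizable setting the labels are determined; the point is that $g_{S(\Sub_{w_i})}$ is a deterministic function of the samples $S \setminus S(L_i)$ together with $\Dist$, so the samples $S(L_i)$ are i.i.d.\ fresh samples from $\Dist_c$, independent of $g_{S(\Sub_{w_i})}$. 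Conditioned on $g_{S(\Sub_{w_i})}$ (equivalently on $E_{S,i}$), the samples of $S(L_i)$ falling in $E_{S,i}$ are i.i.d.\ from $\Dist(\cdot\mid E_{S,i})$. Now ERM: every $h_{S(I)}$ with $I\in\Sub_{w_i}$ is trained on a sample $S(I)$ that \emph{includes} $L_i$ — no wait, $I$ is disjoint from $L_i$ for $\Sub_{w_i}$; it's the hypotheses from the \emph{other} children $w_{i'}$, $i'\ne i$, that include $L_i$. Hmm, but we want to bound $g_{S(\Sub_{w_i})}$ itself here, so let me reconsider: actually the classic PAC bound applies to the hypotheses \emph{in} $\Sub_{w_i}$ in a different way, or — more likely — the bound $\Loss^{5/6}_\Dist(g_{S(\Sub_{w_i})})$ being large is itself what we're using, and the ERM/realizability argument shows that a hypothesis $h$ correct on the $\ge 10^9(d+\ln(57/\delta))$ samples of $S(L_i)\cap E_{S,i}$ has $\Loss_{\Dist(\cdot\mid E_{S,i})}(h)\le 1/62700$ (since $S\notin\sSet_i''$), and every $h_{S(I)}$ with $I$ in a \emph{sibling} sub-collection $\Sub_{w_{i'}}$ is such an $h$ because $I$ includes $L_i$ (as $U_{w_{i'}}\supseteq L_i$ for $i'\ne i$) and realizability forces $h_{S(I)}(x)=c(x)$ on $S(L_i)$, in particular on $S(L_i)\cap E_{S,i}$.

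Putting it together: for $g_{S(\Sub_v)}$ to err (margin $\le 5/6$) on $\rx$, as argued, some child has $g_{S(\Sub_{w_i})}(\rx)c(\rx)\le 5/6$; but also, by a margin-counting argument at node $v$ (only $1/19$ of hypotheses come from $\Sub_{w_i}$), a constant fraction — at least $5/12 - 1/19 \ge 1/6$, say — of the hypotheses $h_{S(I)}$ with $I$ ranging over the \emph{sibling} collections $\bigcup_{i'\ne i}\Sub_{w_{i'}}$ must also err on $\rx$; I will need to pick the constants so this holds, plausibly using that $g_{S(\Sub_v)}$ margin $\le 5/6$ means $\ge 1/12$ fraction err overall. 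So $\Pr_\rx[g_{S(\Sub_v)}\text{ margin}\le 5/6]$ is at most $\sum_i \Pr_\rx[(\rx,c(\rx))\in E_{S,i} \wedge \Omega(1)\text{-fraction of sibling hyps err on }\rx]$. For each fixed sibling hypothesis $h_{S(I)}$ this is $\le \Pr_\rx[(\rx,c(\rx))\in E_{S,i}]\cdot\Pr_{\rx\sim\Dist(\cdot\mid E_{S,i})}[h_{S(I)}\text{ errs}] \le \Loss^{5/6}_\Dist(g_{S(\Sub_{w_i})})\cdot(1/62700)$, and then an averaging/Markov step over the $\Omega(1)$-fraction of erring sibling hypotheses converts "$\Omega(1)$-fraction err" into a sum of such cross-terms, losing only a constant factor. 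Combining with the "small case" bound $(a/627)(d+\ln(1/\delta))/20^\kappa$ and summing over the $19$ children, and using $\Loss^{5/6}_\Dist(g_{S(\Sub_{w_i})})\le 20a(d+\ln(57/\delta))/20^\kappa$ in the cross-term, yields $\Loss^{5/6}_\Dist(g_{S(\Sub_v)}) \le C\cdot a(d+\ln(57/\delta))/(62700\cdot 20^\kappa) + 19(a/627)(d+\ln(1/\delta))/20^\kappa$ for an absolute constant $C$ absorbing the fractions; choosing the numeric constants ($627$, $62700$, $10^9$) so that this is $\le a(d+\ln(1/\delta))/20^\kappa$, using $\ln(57/\delta)\le \ln 57 + \ln(1/\delta) = O(d+\ln(1/\delta))$ since $d\ge 1$ — but here I must be careful that the $\ln 57$ overhead is absorbed without blowing the constant $a$; this is exactly why the thresholds in $\sSet_i'$ and $\sSet_i''$ carry explicit large numeric constants. \textbf{The main obstacle} I anticipate is making the margin-counting at node $v$ precise enough to extract, simultaneously, "some child errs on $\rx$" \emph{and} "a constant fraction of sibling hypotheses err on $\rx$" with compatible constants, and then correctly performing the averaging that turns the "constant fraction of siblings err" event into a bound controlled by the product $\Loss^{5/6}_\Dist(g_{S(\Sub_{w_i})})\cdot\Loss_{\Dist(\cdot\mid E_{S,i})}(h)$ — this is the heart of Hanneke's trick and is where all the chosen constants ($5/6$, $19$ vs.\ $3$, the factor $20$) must line up. The rest (the ERM/realizability bound, independence of $S(L_i)$ from $\Sub_{w_i}$, the union over $i$) is routine given the setup in the excerpt.
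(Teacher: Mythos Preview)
Your plan is correct and follows the paper's proof in substance: the cross-term bound $\Pr_{\rx}[g_{S(\Sub_{w_i})}(\rx)c(\rx)\le 5/6 \wedge h_{S(I)}(\rx)\neq c(\rx)] \le (a/627)(d+\ln(1/\delta))/20^\kappa$ for $I$ in a sibling $\Sub_{w_{i'}}$, its two-case proof, and the structural facts $L_i \subseteq V_{w_i}$ versus $L_i \subseteq U_{w_{i'}}$ (for $i'\neq i$) are exactly as in the paper. The only difference is in packaging the final averaging: the paper draws two independent uniform indices $\rInd_1,\rInd_2 \sim \Sub_v$, fixes one child $z(x)$ with $x\in E_{S,z(x)}$, shows $\Pr[\ri_1=z(x)\wedge \ri_2\neq z(x)\wedge h_{S(\rInd_2)}(x)\neq c(x)]>(1/19)(1/33)=1/627$, and then averages to a single fixed pair $(I_1^*,I_2^*)$ to which the cross-term bound applies, giving $\Loss^{5/6}_\Dist(g_{S(\Sub_v)})<627\cdot(a/627)(\dots)$ on the nose --- your union-over-$i$ plus Markov-over-siblings is an equivalent way to perform the same averaging and closes with the same constants (note the slip: margin $\le 5/6$ means a $\ge 1/12$ fraction err, not $5/12$, so the sibling fraction is $1/12-1/19=7/228>1/33$, which is where $627=19\cdot 33$ comes from).
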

Combining Lemma~\ref{lem:goodSsuffice} with the following completes the inductive proof of Lemma~\ref{lem:recursive}
\begin{lemma}
  \label{lem:goodSlikely}
  Let $\rS \sim \Dist_c^m$. Then if the constant $a>0$ is large enough, it holds that $\Pr_{\rS \sim \Dist_c^m}[\rS \notin \bigcup_i \{\sSet_i \cup \sSet_i' \cup \sSet_i''\}] \geq 1-\delta$.
\end{lemma}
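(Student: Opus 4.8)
The plan is to prove Lemma~\ref{lem:goodSlikely} by a union bound over the $57$ ``bad'' sets $\sSet_1,\sSet_1',\sSet_1'',\dots,\sSet_{19},\sSet_{19}',\sSet_{19}''$, showing each is hit by $\rS\sim\Dist_c^m$ with probability at most $\delta/57$. For the sets $\sSet_i$ this is exactly Lemma~\ref{lem:p1}, which is the induction hypothesis of Lemma~\ref{lem:recursive} applied to the child $w_i$ (of height $\kappa-1$) with failure probability $\delta/57$. For the other two families the crucial structural point is that every index vector $I\in\Sub_{w_i}$ satisfies $\Distinct(I)\cap V_{w_i}=\emptyset$, and since \textit{BuildBucket} sets $V_{w_i}=V_v\cup L_i\supseteq L_i$, each $h_{\rS(I)}$ with $I\in\Sub_{w_i}$ is trained only on samples $x_k$ with $k\notin L_i$. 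Hence $g_{\rS(\Sub_{w_i})}$ — and therefore the set $E_{\rS,i}$ and the conditional distribution $\Dist(\cdot\mid E_{\rS,i})$ — is a function of $\{x_k:k\notin L_i\}$ only. Since the coordinates of $\rS$ are i.i.d., conditioning on $\{x_k:k\notin L_i\}$ leaves the $|L_i|=20^{\kappa-1}$ samples $\rS(L_i)$ distributed as i.i.d.\ draws from $\Dist_c$, independent of $g_{\rS(\Sub_{w_i})}$.

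To bound $\Pr[\rS\in\sSet_i']$, condition on $\{x_k:k\notin L_i\}$. If $\Loss^{5/6}_\Dist(g_{\rS(\Sub_{w_i})})<(a/627)(d+\ln(1/\delta))/20^\kappa$ then $\rS\notin\sSet_i'$. Otherwise set $p=\Loss^{5/6}_\Dist(g_{\rS(\Sub_{w_i})})$; by the previous paragraph $|\rS(L_i)\cap E_{\rS,i}|$ is a sum of $20^{\kappa-1}$ i.i.d.\ Bernoulli$(p)$ variables with mean $20^{\kappa-1}p\ge(a/12540)(d+\ln(1/\delta))$, which for a sufficiently large universal constant $a$ is at least $2\cdot10^{9}(d+\ln(57/\delta))$ (using $d+\ln(57/\delta)\le 5(d+\ln(1/\delta))$, valid since $d\ge 1$). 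A multiplicative Chernoff bound then gives $\Pr[\,|\rS(L_i)\cap E_{\rS,i}|<10^{9}(d+\ln(57/\delta))\,]\le\exp(-\Omega(10^{9}(d+\ln(57/\delta))))\le\delta/57$, and integrating over the conditioning yields $\Pr[\rS\in\sSet_i']\le\delta/57$.

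To bound $\Pr[\rS\in\sSet_i'']$, again condition on $\{x_k:k\notin L_i\}$, which fixes $E_{\rS,i}$ and $\Dist(\cdot\mid E_{\rS,i})$, and then further condition on the subset $K\subseteq L_i$ of indices $k$ with $(x_k,c(x_k))\in E_{\rS,i}$. Conditioned on all of this, $\{x_k:k\in K\}$ are $|K|$ i.i.d.\ draws from $\Dist(\cdot\mid E_{\rS,i})$ with labels given by $c\in\Hyp$, and $\rS(L_i)\cap E_{\rS,i}$ is precisely this sample. On the event $\rS\in\sSet_i''$ we have $|K|\ge 10^{9}(d+\ln(57/\delta))$, and since the target error $1/62700$ is a constant, the classic realizable PAC generalization bound~\cite{vapnik:estimation,blumer1989learnability} — which needs only $O(d\ln(62700)+\ln(57/\delta))$ i.i.d.\ samples to rule out, except with probability $\delta/57$, a consistent hypothesis of error exceeding $1/62700$ — applies with room to spare. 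Thus $\Pr[\rS\in\sSet_i''\mid\{x_k:k\notin L_i\},K]\le\delta/57$ whenever $|K|\ge 10^{9}(d+\ln(57/\delta))$, and otherwise $\rS\notin\sSet_i''$; integrating gives $\Pr[\rS\in\sSet_i'']\le\delta/57$. Summing the $19\cdot3=57$ bounds, $\Pr[\rS\in\bigcup_i\{\sSet_i\cup\sSet_i'\cup\sSet_i''\}]\le\delta$, as desired.

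I expect the main obstacle to be the conditioning bookkeeping rather than any hard inequality: one must carefully justify that $g_{\rS(\Sub_{w_i})}$ (hence $E_{\rS,i}$) depends on $\rS$ only through $\{x_k:k\notin L_i\}$ — which is exactly the structural property of \textit{BuildBucket} that each $\Sub_{w_i}$ excludes $L_i$ — and, for $\sSet_i''$, that after additionally conditioning on which $L_i$-samples land in $E_{\rS,i}$ those samples become genuinely fresh i.i.d.\ draws from $\Dist(\cdot\mid E_{\rS,i})$, so that the off-the-shelf realizable PAC bound may be invoked. Everything after that — a multiplicative Chernoff bound, the textbook sample-complexity bound with a constant target error, and a final union bound over $57$ events — goes through with all constants chosen loosely enough that a single large universal $a$ (also serving the base case and Lemma~\ref{lem:goodSsuffice}) suffices.
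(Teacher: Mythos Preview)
Your proposal is correct and follows essentially the same approach as the paper: the paper also proves Lemma~\ref{lem:goodSlikely} by a union bound over the $57$ events, handling $\sSet_i$ via the induction hypothesis (Lemma~\ref{lem:p1}), $\sSet_i'$ via the observation that $\Sub_{w_i}$ excludes $L_i$ so $\rS(L_i)$ is fresh and a Chernoff bound applies (Lemma~\ref{lem:p2}), and $\sSet_i''$ via conditioning on which $L_i$-samples land in $E_{\rS,i}$ and then invoking the standard realizable PAC bound (Lemma~\ref{lem:p3}). One tiny numerical quibble: the inequality $d+\ln(57/\delta)\le 5(d+\ln(1/\delta))$ can fail by a hair when $d=1$ and $\delta$ is very close to $1$ (since $\ln 57\approx 4.04$), but replacing $5$ by any constant at least $5.04$ fixes this and is absorbed into the unspecified large $a$.
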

Thus what remains is to prove Lemma~\ref{lem:goodSsuffice} and Lemma~\ref{lem:goodSlikely}. We start with the proof of Lemma~\ref{lem:goodSsuffice}.

\subsection{Well-Behaved $S$}
\begin{proof}[Proof of Lemma~\ref{lem:goodSsuffice}]
  Let $S$ be a training set of $m$ samples satisfying $\forall i : S \notin  (\sSet_i \cup \sSet_i' \cup \sSet_i'')$. Consider any child $w_i$ of $v$ and a vector $I \in \Sub_{w_j}$ for a child $w_j \neq w_i$. We focus on the hypothesis $h_{S(I)}$ produced by ERM on $S(I)$. Since we are in the realizable case, we know that $h_{S(I)}$ correctly predicts the label of every sample it is trained on, i.e. $h(x)=c(x)$ for all $(x,c(x)) \in S(I)$. Furthermore, by the recursive procedure \textit{BuildBucket}$_d$, we see that all of $L_i$ is added to $U_{w_j}$ in the recursive call corresponding to the child $w_j$. Hence every sample in $S(L_{i})$ is included in $S(I)$. We now aim to show that it is unlikely that both $h_{S(I)}$ mispredict the label of an $x$ while $g_{S(\Sub_{w_i})}$ also has a margin of no more than $5/6$ on $x$
  \begin{eqnarray}
    \label{eq:combine}
    \Pr_{\rx \sim \Dist}[g_{S(\Sub_{w_i})}(\rx)c(\rx) \leq 5/6 \wedge h_{S(I)}(\rx) \neq c(\rx)] \leq (a/627)(d+\ln(1/\delta))/20^\kappa.
  \end{eqnarray}
  We consider two cases. In the first case, we have $\Loss^{5/6}_\Dist(g_{S(\Sub_{w_i})}) < (a/627)(d+ \ln(1/\delta))/20^\kappa$. Here the conclusion~\eqref{eq:combine} follows trivially as
  \begin{eqnarray*}
    \Pr_{\rx \sim \Dist}[g_{S(\Sub_{w_i})}(\rx)c(\rx) \leq 5/6 \wedge h(\rx) \neq c(\rx)] &\leq& \\
    \Pr_{\rx \sim \Dist}[g_{S(\Sub_{w_i})}(\rx)c(\rx) \leq 5/6] &=& \\
    \Loss^{5/6}_\Dist [g_{S(\Sub_{w_i})}].
  \end{eqnarray*}
  In the second case, we have $\Loss^{5/6}_\Dist(g_{S(\Sub_{w_i})}) \geq (a/627)(d+ \ln(1/\delta))/20^\kappa$.  Since $S \notin \sSet_i'$, this implies $|S(L_{i}) \cap E_{S,i} | \geq 10^{9}(d + \ln(57/\delta))$. Since $S \notin \sSet_i''$, this gives us that $\Loss_{\Dist( \cdot \mid E_{S,i})}(h_{S(I)}) \leq 1/62700$. Using that $S \notin \sSet_i$, we at the same time have $\Loss^{5/6}_\Dist(g_{S(\Sub_{w_i})}) \leq a (d + \ln(57/\delta))/20^{\kappa-1}$. We therefore have
  \begin{eqnarray*}
    \Pr_{\rx \sim \Dist}[g_{S(\Sub_{w_i})}(\rx)c(\rx) \leq 5/6 \wedge h_{S(I)}(\rx) \neq c(\rx)] &=& \\
    \Loss_\Dist^{5/6}(g_{S(\Sub_{w_i})}) \cdot \Loss_{\Dist(\cdot \mid E_{S,i})}(h_{S(I)}) &\leq& \\
    (a(d + \ln(57/\delta))/20^{\kappa-1}) \cdot (1/62700) &\leq& \\
    (100a(d + \ln(1/\delta))/20^{\kappa}) \cdot (1/62700) &\leq& \\
    (a/627)(d + \ln(1/\delta))/20^{\kappa}).
  \end{eqnarray*}
  We have thus established~\eqref{eq:combine}. Using this, we now show that $\Loss^{5/6}_\Dist(g_{S(\Sub_v)}) \leq a(d+\ln(1/\delta))/20^\kappa$. Define $E_S$ as the set of $(x,c(x))$ with $x \in \Xs$ for which $g_{S(\Sub_v)}(x)c(x) \leq 5/6$. For any such $(x,c(x)) \in E_S$, consider drawing two vectors $\rInd_1,\rInd_2$ uniformly and independently from $\Sub_v$. Since each $\Sub_{w_i}$ contains equally many vectors $I$, we have that $\rInd_1$ and $\rInd_2$ have the same distribution as if we first pick a uniform random child $w_\ri$ and then a uniform random $\rInd$ from $\Sub_{w_\ri}$. Let $\ri_1$ denote the index of the child containing $\rInd_1$ and let $\ri_2$ denote the index of the child containing $\rInd_2$.

  We first observe that since $(x,c(x)) \in E_S$, there must be a child index $z$ for which $g_{S(\Sub_{w_{z}})}(x)c(x) \leq 5/6$. Fix an arbitrary such $z(x)$ for every $(x,c(x)) \in E_S$. Since $(x,c(x)) \in E_S$, we have that at least an $\alpha$-fraction of the vectors $I$ in $\Sub_v$ have $h_{S(I)}(x) \neq c(x)$ where $\alpha$ satisfies $1-2\alpha \leq 5/6 \Rightarrow \alpha \geq 1/12$. Hence even outside $\Sub_{w_{z(x)}}$ there is still another $|\Sub_v|/12 - |\Sub_{w_{z(x)}}| = |\Sub_v|/12-|\Sub_v|/19 = (7/228)|\Sub_v| > |\Sub_v|/33$ vectors $I \in \Sub_v \setminus \Sub_{w_{z(x)}}$ for which $h_{S(I)}(x) \neq c(x)$. We therefore have for $(x,c(x)) \in E_S$ that
    \[
      \Pr_{\rInd_1,\rInd_2 \sim \Sub_{v}}[\ri_1 = z(x) \wedge \ri_2 \neq z(x) \wedge h_{S(\rInd_2)}(x) \neq c(x)] > (1/19) \cdot (1/33) = 1/627.
      \]
      It follows that 
      \[
        \Pr_{\rInd_1,\rInd_2 \sim \Sub_v, \rx \sim \Dist}[\ri_1 = z(\rx) \wedge \ri_2 \neq z(\rx) \wedge h_{S(\rInd_2)}(\rx) \neq c(\rx)] > \Pr_{\rx \sim \Dist}[(\rx,c(\rx)) \in E_S]/627.
      \]
      By averaging, there must exist a fixed choice of $I_1^*$ and $I_2^*$ such that
      \[
        \Pr_{\rx \sim \Dist}[i^*_1 = z(\rx) \wedge i^*_2 \neq z(\rx) \wedge h_{S(I^*_2)}(\rx) \neq c(\rx)] > \Pr_{\rx \sim \Dist}[(\rx,c(\rx)) \in E_S]/627.
      \]
      If $i^*_1=i^*_2$, then the left hand side is $0$ and we conclude $\Loss^{5/6}_\Dist(g_{S(\Sub_v)}) = \Pr_{\rx \sim \Dist}[(\rx,c(\rx)) \in E_S] = 0$. Otherwise, we have $i^*_1 \neq i^*_2$ and
      \[
        \Pr_{\rx \sim \Dist}[i^*_1 = z(\rx) \wedge i^*_2 \neq z(\rx) \wedge h_{S(I^*_2)}(\rx) \neq c(\rx)] \leq \Pr_{\rx \sim \Dist}[i^*_1=z(\rx) \wedge h_{S(I^*_2)}(\rx) \neq c(\rx)].
      \]
      Since $I^*_2$ comes from a child $i_2^* \neq i^*_1$, it follows from~\eqref{eq:combine} that
      \[
        \Pr_{\rx \sim \Dist}[i^*_1=z(\rx) \wedge h_{S(I^*_2)}(\rx) \neq c(\rx)] \leq (a/627)(d + \ln(1/\delta))/20^\kappa
      \]
      This finally implies
      \[
        \Loss^{5/6}_\Dist(g_{S(\Sub_v)}) = \Pr_{\rx \sim \Dist}[(\rx,c(\rx)) \in E_S] < 627(a/627)(d + \ln(1/\delta))/20^\kappa = a(d + \ln(1/\delta))/20^\kappa.
      \]
\end{proof}

\subsection{Most $S$ are Well-Behaved}
In this section, we prove Lemma~\ref{lem:goodSlikely}, i.e. that $\rS \sim \Dist^m_c$ rarely lies in $\bigcup_i \{ \sSet_i \cup \sSet'_i \cup \sSet''_i\}$. Lemma~\ref{lem:p1} already handles $\sSet_i$. For the remaining, we show the following two
\begin{lemma}
  \label{lem:p2}
If the constant $a>0$ is large enough, then for any $i$, we have $\Pr_{\rS \sim \Dist_c^m}[\rS \in \sSet'_i] \leq \delta/57$.
\end{lemma}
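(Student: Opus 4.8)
The plan is to exploit an independence structure hard-wired into \textit{BuildBucket} and then apply a Chernoff bound. The key structural observation is the following: in the recursive call of Algorithm~\ref{alg:buckets} that produces the child $w_i$ of $v$, the block $L_i$ of indices is appended to the forbidden-set argument $V$, and this argument only grows along any deeper chain of recursive calls. Hence every leaf below $w_i$ outputs vectors $I$ with $V \cap \Distinct(I) = \emptyset$ where $V \supseteq L_i$, so every $I \in \Sub_{w_i}$ satisfies $L_i \cap \Distinct(I) = \emptyset$; in other words, each bootstrap sample $S(I)$ with $I \in \Sub_{w_i}$ uses only coordinates of $S$ outside $L_i$. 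Consequently the ERM hypotheses $h_{S(I)}$ for $I \in \Sub_{w_i}$, the voting classifier $g_{S(\Sub_{w_i})}$, the small-margin set $E_{S,i} = \{(x,c(x)) : g_{S(\Sub_{w_i})}(x)c(x) \le 5/6\}$, and the quantity $p := \Loss^{5/6}_\Dist(g_{S(\Sub_{w_i})}) = \Pr_{\rx \sim \Dist}[(\rx,c(\rx)) \in E_{S,i}]$ are all deterministic functions of the samples $(\rx_k,c(\rx_k))$ with $k \notin L_i$.

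Next I would condition on exactly those samples. Under the conditioning $E_{S,i}$ and $p$ are fixed, while the $|L_i| = 20^{\kappa-1}$ samples $(\rx_k,c(\rx_k))$ with $k \in L_i$ remain i.i.d.\ from $\Dist_c$ and independent of the conditioning; therefore $|S(L_i) \cap E_{S,i}| = \sum_{k \in L_i} \one\!\left[(\rx_k,c(\rx_k)) \in E_{S,i}\right]$ is, conditionally, a $\mathrm{Binomial}(20^{\kappa-1}, p)$ random variable with mean $\mu = 20^{\kappa-1} p$. Now I would split on whether $p \ge (a/627)(d+\ln(1/\delta))/20^\kappa$: if $p$ is below this threshold then the conditioning puts $S$ outside $\sSet'_i$ and contributes nothing to $\Pr[\rS \in \sSet'_i]$; if $p$ is at least the threshold then $\mu \ge (a/12540)(d+\ln(1/\delta)) =: \mu_0$. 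I would then choose the universal constant $a$ large enough that $\mu_0 \ge 2\cdot 10^9(d+\ln(57/\delta))$, which is possible since $d \ge 1$ gives $d+\ln(57/\delta) \le 6(d+\ln(1/\delta))$. Then $10^9(d+\ln(57/\delta)) \le \mu_0/2 \le \mu/2$, so the multiplicative Chernoff bound yields $\Pr[\,|S(L_i)\cap E_{S,i}| < 10^9(d+\ln(57/\delta)) \mid \text{conditioning}\,] \le e^{-\mu/8} \le e^{-\mu_0/8} \le \delta/57$, the last inequality because $\mu_0 \ge 2\cdot 10^9\ln(57/\delta) \ge 8\ln(57/\delta)$. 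Averaging over the conditioning gives $\Pr_{\rS \sim \Dist_c^m}[\rS \in \sSet'_i] \le \delta/57$.

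The one step that needs genuine care — and the only real obstacle — is the independence claim: one has to verify from the pseudocode of Algorithm~\ref{alg:buckets} that $L_i \subseteq V_w$ for every node $w$ in the subtree rooted at $w_i$ (using monotonicity of the $V$ argument), so that $g_{S(\Sub_{w_i})}$, and hence $E_{S,i}$ and $p$, genuinely do not depend on the coordinates in $L_i$. Once that invariant is established, everything else is the routine Chernoff estimate above together with bookkeeping of the absolute constant $a$.
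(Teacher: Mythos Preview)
Your proposal is correct and follows essentially the same argument as the paper: condition on $\rS([m]\setminus L_i)$ using that every $I\in\Sub_{w_i}$ avoids $L_i$ (since $L_i$ is added to the forbidden set $V$ in the call creating $w_i$), then apply a Chernoff bound to the $20^{\kappa-1}$ fresh i.i.d.\ samples in $\rS(L_i)$ when $p\ge (a/627)(d+\ln(1/\delta))/20^\kappa$. Your justification of the independence invariant and the bookkeeping $d+\ln(57/\delta)\le 6(d+\ln(1/\delta))$ are slightly more explicit than the paper's, but the proof is the same.
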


\begin{lemma}
  \label{lem:p3}
For any $i$, we have $\Pr_{\rS \sim \Dist_c^m}[\rS \in \sSet''_i] \leq \delta/57$.
\end{lemma}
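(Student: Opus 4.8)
The plan is to exhibit an independence structure that reduces the claim to the classical realizable PAC bound for VC classes, applied to the conditional distribution $\Dist(\cdot\mid E_{\rS,i})$ with the samples in $\rS(L_i)\cap E_{\rS,i}$ playing the role of a fresh i.i.d.\ training set for $\Hyp$.

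The key structural observation is that the child $w_i$ of $v$ is produced by the recursive call \textit{BuildBucket}$_d(L_0,\;U_v\cup(\bigcup_{j\neq i}L_j),\;V_v\cup L_i)$, so $L_i\subseteq V_{w_i}$; since the third argument only grows as the recursion descends, every leaf in the subtree of $w_i$ has $V$-parameter containing $L_i$, whence every $I\in\Sub_{w_i}$ satisfies $\Distinct(I)\cap L_i=\emptyset$. Therefore the hypotheses $h_{S(I)}$ with $I\in\Sub_{w_i}$, and hence $g_{S(\Sub_{w_i})}$, the set $E_{S,i}$, and the distribution $\Dist(\cdot\mid E_{S,i})$, are all deterministic functions of the samples of $S$ indexed by $[m]\setminus L_i$ only. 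In particular, conditioned on those samples, the samples $\rS(L_i)$ are i.i.d.\ from $\Dist_c$ and independent of $E_{\rS,i}$ --- the analogue, in the bagging setting, of the independence of $\rU_1$ from $f_{1,(\rU,\rV)}$ in Hanneke's analysis.

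First I would condition on the samples $(\rx_k)_{k\in[m]\setminus L_i}$, which fixes both the set $E:=E_{\rS,i}\subseteq\Xs$ and the distribution $\Dist(\cdot\mid E)$ over $\Xs$, whose draws are labeled by $c$ (and $c\in\Hyp$ is realizable under $\Dist(\cdot\mid E)$). I would then further condition on the set $J\subseteq L_i$ of indices $k$ with $(\rx_k,c(\rx_k))\in E$; since the $\rx_k$ are independent, conditioning on this event just re-weights each such sample to $\Dist(\cdot\mid E)$, so $\{(\rx_k,c(\rx_k)):k\in J\}$ is an i.i.d.\ sample of size $|J|=|\rS(L_i)\cap E_{\rS,i}|$ from $\Dist_c(\cdot\mid E)$. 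If $|J|<10^{9}(d+\ln(57/\delta))$ then $S\notin\sSet_i''$ and nothing is needed. Otherwise $|J|\ge 10^{9}(d+\ln(57/\delta))$, and the classical generalization bound for consistent hypotheses over a class of VC-dimension $d$~\cite{vapnik:estimation,blumer1989learnability} --- which requires only $O(\eps^{-1}(d\ln(1/\eps)+\ln(1/\delta')))$ samples to ensure that, with probability at least $1-\delta'$, every hypothesis consistent with the sample has error at most $\eps$ --- applied with the \emph{constant} $\eps=1/62700$ and $\delta'=\delta/57$, for which $10^{9}(d+\ln(57/\delta))$ is comfortably more than enough once $10^{9}$ dominates the hidden constant, gives that with probability at least $1-\delta/57$ every $h\in\Hyp$ consistent with all of $\rS(L_i)\cap E_{\rS,i}$ has $\Loss_{\Dist(\cdot\mid E_{\rS,i})}(h)\le 1/62700$, i.e.\ $S\notin\sSet_i''$.

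Taking expectations --- first over $J$, then over the outside samples --- finishes the proof: on $\{|J|<10^{9}(d+\ln(57/\delta))\}$ the contribution to $\Pr[\rS\in\sSet_i'']$ is $0$, and otherwise it is at most $\delta/57$, so $\Pr_{\rS\sim\Dist_c^m}[\rS\in\sSet_i'']\le\delta/57$. I expect the conditioning above to be essentially the only point requiring care: one must check that, after conditioning on everything determining $E_{\rS,i}$, the part of $\rS(L_i)$ falling in $E_{\rS,i}$ is genuinely i.i.d.\ from $\Dist_c(\cdot\mid E_{\rS,i})$ and of exactly the size appearing in the definition of $\sSet_i''$ --- which is precisely what $\Distinct(I)\cap L_i=\emptyset$ buys us. The numerical verification that $10^{9}(d+\ln(57/\delta))$ suffices for the VC bound at the constant error $1/62700$ is routine and carries enormous slack.
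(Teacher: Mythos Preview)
Your proposal is correct and follows essentially the same approach as the paper: condition on $\rS([m]\setminus L_i)$ (which determines $E_{\rS,i}$ since every $I\in\Sub_{w_i}$ has $\Distinct(I)\cap L_i=\emptyset$), then condition on the index set $J\subseteq L_i$ of samples landing in $E_{\rS,i}$, observe those samples are i.i.d.\ from $\Dist_c(\cdot\mid E_{\rS,i})$, and apply the classical VC bound (the paper's Corollary~\ref{cor:vc}) when $|J|\ge 10^{9}(d+\ln(57/\delta))$. The paper's proof is essentially a terser write-up of exactly this argument.
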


Before we prove these last lemmas, observe that Lemma~\ref{lem:p1}, Lemma~\ref{lem:p2} and Lemma~\ref{lem:p3} together imply Lemma~\ref{lem:goodSlikely} by a union bound over all the $3 \cdot 19 = 57$ events $\rS \in \sSet_i, \rS \in \sSet_i'$ and $\rS \in \sSet_i''$. We now give the two remaining proofs. For ease of reading, we first restate the definitions of $\sSet'_i$ and $\sSet''_i$. Also, recall that we defined $E_{S,i}$ as the set of $(x,c(x))$ with $x \in \Xs$ for which $g_{S(\Sub_{w_i})}(x)c(x) \leq 5/6$.
\begin{itemize}
\item For $i=1,\dots,19$, let $\sSet_i'$ denote the set of all training sets $S$ for which $\Loss^{5/6}_\Dist(g_{S(\Sub_{w_i})}) \geq (a/627)(d + \ln(1/\delta))/20^\kappa$ and $|S(L_{i}) \cap E_{S.i}| < 10^{9}(d+\ln(57/\delta))$.
  \item For $i=1,\dots,19$, let $\sSet_i''$ denote the set of all training sets $S$ for which $|S(L_{i}) \cap E_{S,i}| \geq 10^{9}(d+\ln(57/\delta))$ and there is a hypothesis $h \in \Hyp$ such that $h(x)=c(x)$ for all $(x,c(x)) \in S(L_{i}) \cap E_{S,i}$ and $\Loss_{\Dist(\cdot \mid E_{S,i})}(h) > 1/62700$.
\end{itemize}

\begin{proof}[Proof of Lemma~\ref{lem:p2}]
  Observe that all $I \in \Sub_{w_i}$ are disjoint from the indices in $L_{i}$ by definition of \textit{BuildBucket}$_d$. Hence for all $I \in \Sub_{w_i}$, we have that $\rS(I)$ is completely determined from $\rS([m] \setminus L_i)$. Let us use the notation $\rS_1 = \rS([m] \setminus L_i)$ and $\rS_2 = \rS(L_i)$. We will also write $g_{\rS(\Sub_{w_i})} = g_{\rS_1(\Sub_{w_i})}$ to make clear that $g_{\rS(\Sub_{w_i})}$ is completely determined from $\rS_1$. We likewise write $E_{\rS,i}=E_{\rS_1,i}$ as this set also depends only on $\rS_1$.

Now let $S_1$ be any outcome of $\rS_1$ for which $\Loss^{5/6}_\Dist(g_{S_1(\Sub_{w_i})}) \geq (a/627)(d + \ln(1/\delta))/20^\kappa$.
The samples $\rS_2$ are independent of the outcome $\rS_1=S_1$. Thus each sample in $\rS_2$ belongs to $E_{S_1,i}$ with probability at least $(a/627)(d + \ln(1/\delta))/20^\kappa$ and this is independent across the samples. Thus $\E_{\rS_2 \sim \Dist_c^{|L_i|}}[|\rS_2 \cap E_{S_1,i}|] \geq |L_i|(a/627)(d + \ln(1/\delta))/20^\kappa = 20^{\kappa-1}(a/627)(d + \ln(1/\delta))/20^\kappa = (a/12540)(d + \ln(1/\delta))$. For $a \geq 10^{16}$, this is at least $2 \cdot 10^{9} (d + \ln(57/\delta))$. A Chernoff bound implies that
\[
  \Pr_{\rS_2 \sim \Dist_c^{|L_i|}}[|\rS_2 \cap E_{S_1,i}| <  10^{9} (d+\ln(57/\delta))] < \exp(-10^{9}(d+\ln(57/\delta))/8) < \delta/57.
\]
Since this holds for every outcome $S_1$ of $\rS_1$ where $\Loss^{5/6}_\Dist(g_{S_1(\Sub_{w_i})}) \geq (a/627)(d + \ln(1/\delta))/20^\kappa$, the conclusion follows.
\end{proof}
To prove the last lemma, we need the following classic result by Vapnik~\cite{vapnik:estimation} with slightly better constants due to Blumer et al.~\cite{blumer1989learnability}
\begin{theorem}[Blumer et al.~\cite{blumer1989learnability}]
\label{thm:vc}
  For any $0 < \delta < 1$ and any distribution $\Dist$ over $\Xs$, it holds with probability at least $1-\delta$ over a set $\rS \sim \Dist_c^m$ that every $h \in \Hyp$ with $h(x)=c(x)$ for all $(x,c(x)) \in \rS$ satisfies
  \[
    \Loss_\Dist(h) \leq (2/m)(d \lg_2(2em/d) + \lg_2(2/\delta)).
  \]
\end{theorem}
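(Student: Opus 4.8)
The plan is the classical ``ghost sample'' / double-sampling argument combined with the Sauer--Shelah lemma. Write $\eps := (2/m)\bigl(d\lg_2(2em/d) + \lg_2(2/\delta)\bigr)$ for the target error, let $Q$ denote the event over $\rS \sim \Dist_c^m$ that some $h \in \Hyp$ is consistent with $\rS$ (that is, $h(x)=c(x)$ for all $(x,c(x))\in\rS$) but has $\Loss_\Dist(h) > \eps$, and write $\Pi_\Hyp(k)$ for the maximum number of distinct labellings $\Hyp$ induces on $k$ points of $\Xs$. We assume $m \ge d$, which is the regime in which the statement is non-vacuous and in which it is applied. It then suffices to prove the growth-function bound
\[
  \Pr_{\rS \sim \Dist_c^m}[Q] \;\le\; 2\,\Pi_\Hyp(2m)\,2^{-\eps m/2},
\]
since the Sauer--Shelah lemma gives $\Pi_\Hyp(2m) \le (2em/d)^d$ (using $2m \ge d$), and plugging this in along with the definition of $\eps$ makes the right-hand side equal to $2\cdot(2em/d)^d\cdot(2em/d)^{-d}\cdot(\delta/2) = \delta$.

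For the symmetrisation step I would draw an independent ghost sample $\rS' \sim \Dist_c^m$ and let $J$ be the event that some $h \in \Hyp$ is consistent with $\rS$ and errs on strictly more than $\eps m/2$ of the $m$ points of $\rS'$. I claim $\Pr[J] \ge \tfrac12\Pr[Q]$: conditioning on an outcome of $\rS$ lying in $Q$ and fixing a witness $h^\star = h^\star(\rS)$, the number of points of the independent $\rS'$ on which $h^\star$ errs is binomial with mean $\Loss_\Dist(h^\star)\,m > \eps m > 2$ (the last inequality because $\lg_2(2/\delta) > 1$ when $\delta < 1$), so by a median estimate for the binomial this count exceeds $\eps m/2$ with probability at least $1/2$; integrating over $\rS \in Q$ gives the claim.

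It remains to bound $\Pr[J]$ by a permutation argument on the combined $2m$-point sample $(\rS,\rS')$. Its joint law is invariant under the group of independent ``swaps'' that, for each $i \in [m]$, may exchange the $i$-th point of $\rS$ with the $i$-th point of $\rS'$, so $\Pr[J]$ equals the probability of the corresponding event after applying a uniformly random swap $\sigma$ to a fixed $2m$-point sample, averaged over that sample. Condition on the $2m$ points: over them $\Hyp$ realises at most $\Pi_\Hyp(2m)$ distinct error-patterns relative to $c$, and the event depends on a hypothesis only through its error-pattern, so by a union bound it is enough to show that for each fixed $h$ the event ``$h$ is consistent with the first half and errs on more than $\eps m/2$ of the ghost half'' has probability at most $2^{-\eps m/2}$ over $\sigma$. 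That event forces every point on which $h$ errs among the $2m$ to land in the ghost half, which is possible only when $h$ has some number $k > \eps m/2$ of error-points, and then occurs with probability at most $2^{-k} \le 2^{-\eps m/2}$. Taking the expectation over the $2m$ points gives $\Pr[J] \le \Pi_\Hyp(2m)\,2^{-\eps m/2}$, and with $\Pr[Q] \le 2\Pr[J]$ we obtain the displayed bound.

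The step I expect to be the main obstacle is the symmetrisation inequality $\Pr[Q] \le 2\Pr[J]$: one must make the choice of witness $h^\star(\rS)$ measurable, and must squeeze the binomial-tail estimate to yield probability at least $1/2$ (not merely a positive constant), since this is exactly what keeps the final error at $(2/m)(d\lg_2(2em/d)+\lg_2(2/\delta))$ rather than inflating its leading constant; here the bound $\eps m > 2$, forced by $\lg_2(2/\delta) > 1$, is precisely what makes the median estimate go through. The remaining ingredients --- swap-invariance of the double sample, the per-hypothesis $2^{-k}$ computation, the union bound over the at most $\Pi_\Hyp(2m)$ error-patterns, and the Sauer--Shelah estimate $\Pi_\Hyp(2m) \le (2em/d)^d$ --- are routine.
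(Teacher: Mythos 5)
The paper does not prove this statement at all: Theorem~\ref{thm:vc} is imported verbatim from Blumer et al.\ with a citation, so there is no in-paper argument to compare against. Your proof is the classical one for exactly this bound -- symmetrization with a ghost sample, a union bound over the at most $\Pi_\Hyp(2m)$ error patterns on the double sample via random coordinate swaps, and Sauer--Shelah -- and it is correct as written in the regime $m \geq d$; the constant-$\tfrac12$ symmetrization step via the binomial median fact (median of $\mathrm{Bin}(m,p)$ is at least $\lfloor mp\rfloor$, combined with $\eps m > 2$) and the per-pattern $2^{-k}$ swap computation are both sound, and the final arithmetic does yield exactly $\delta$. One small correction to your framing: for $m < d$ the stated inequality is not merely ``vacuous'' -- it can genuinely fail (take $\Hyp$ all labelings of a $d$-point domain under the uniform distribution with $m \approx d/(2e)$, where the right-hand side drops below the unavoidable error on unseen points) -- so the restriction $m \geq d$ you impose is a real hypothesis, implicit in the classical statement. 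This is harmless for the paper, which only invokes the theorem through Corollary~\ref{cor:vc} with $m \geq 10^{9}(d+\ln(1/\delta))$, far inside the regime your proof covers.
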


\begin{corollary}
  \label{cor:vc}
  For any $0 < \delta < 1$ and any distribution $\Dist$ over $\Xs$, it holds with probability at least $1-\delta$ over a set $\rS \sim \Dist_c^m$ with $m\geq 10^{9}(d + \ln(1/\delta))$ that every $h \in \Hyp$ with $h(x)=c(x)$ for all $(x,c(x)) \in \rS$ satisfies
  \[
    \Loss_\Dist(h) \leq 1/62700.
  \]
\end{corollary}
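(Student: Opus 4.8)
The plan is to derive Corollary~\ref{cor:vc} directly from Theorem~\ref{thm:vc}. That theorem already gives, with probability $1-\delta$ over $\rS \sim \Dist_c^m$, that every $h \in \Hyp$ consistent with $\rS$ satisfies $\Loss_\Dist(h) \le (2/m)\bigl(d\lg_2(2em/d) + \lg_2(2/\delta)\bigr)$, so all that remains is the purely numerical claim that this quantity is at most $1/62700$ whenever $m \ge 10^9(d + \ln(1/\delta))$. I would establish this by bounding the two summands separately.

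For the $\lg_2(2/\delta)$ term, write $\lg_2(2/\delta) = 1 + \lg_2(1/\delta) \le 1 + 2\ln(1/\delta)$ and use $d \ge 1$ to get $\lg_2(2/\delta) \le d + 2\ln(1/\delta) \le 2(d + \ln(1/\delta))$; hence $(2/m)\lg_2(2/\delta) \le 4(d + \ln(1/\delta))/m \le 4\cdot 10^{-9}$ by the hypothesis on $m$. For the $d\lg_2(2em/d)$ term, note $m \ge 10^9(d + \ln(1/\delta)) \ge 10^9 d$, so with $x := 2em/d$ we have $x \ge 2e\cdot 10^9$; since $(2d/m)\lg_2(2em/d) = 4e\,(\lg_2 x)/x$ and $(\lg_2 x)/x$ is decreasing for $x \ge e$, this summand is maximized at $x = 2e\cdot 10^9$, where it equals $2\lg_2(2e\cdot 10^9)/10^9 \le 66/10^9 = 6.6\cdot 10^{-8}$, using $2e\cdot 10^9 < 2^{33}$. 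Summing, $(2/m)\bigl(d\lg_2(2em/d) + \lg_2(2/\delta)\bigr) \le 6.6\cdot 10^{-8} + 4\cdot 10^{-9} < 1/62700$, which is exactly the claimed bound.

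There is essentially no obstacle here; the one point worth stating carefully is the monotonicity used to reduce to the smallest admissible $m$ --- that both summands are non-increasing in $m$ on the range $m \ge 10^9 d$ --- which is exactly the decreasing behaviour of $(\ln x)/x$ for $x \ge e$ noted above (or a one-line derivative computation). The constants are very loose: the target $1/62700$ is cleared by roughly three orders of magnitude, so its precise value is immaterial to the argument.
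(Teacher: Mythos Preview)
Your proposal is correct and follows essentially the same approach as the paper: both derive the corollary by plugging the lower bound on $m$ into Theorem~\ref{thm:vc} and bounding the two summands separately via $m \ge 10^{9}d$ and $m \ge 10^{9}\ln(1/\delta)$, using the monotonicity of $(\ln x)/x$ for the $d\lg_2(2em/d)$ term. The only cosmetic difference is that the paper splits off the constant $\lg_2 2 = 1$ and groups it with the first summand, whereas you keep $\lg_2(2/\delta)$ intact and absorb the $1$ using $d\ge 1$; the resulting numerics are nearly identical.
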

\begin{proof} The proof can be found in the appendix. It is merely a matter of inserting the value of $m$ in Theorem~\ref{thm:vc}.
\end{proof}

\begin{proof}[Proof of Lemma~\ref{lem:p3}]
We re-use the notation from the proof of Lemma~\ref{lem:p2} where we let $\rS_1 = \rS([m] \setminus L_i)$ and $\rS_2 = \rS(L_i)$, again exploiting that $E_{\rS,i} = E_{\rS_1,i}$ and $g_{\rS(\Sub_{w_i})}=g_{\rS_1(\Sub_{w_i})}$ are fully determined from $\rS_1$.

Now fix any possible outcome $S_1$ of $\rS_1$. This also fixes $E_{S_1,i}$ and $g_{S_1(\Sub_{w_i})}$. For $\rS_2 \sim \Dist_c^{|L_i|}$, let $\rInto \subseteq L_i$ be the subset of indices $j \in L_i$ for which $\rS(\{j\}) \in E_{S_1,i}$ (note $\rS(\{j\}) \in \rS_2$). Now consider any outcome $R$ of $\rInto$ for which $|R| \geq 10^{9}(d + \ln(57/\delta))$. Conditioned on $\rInto=R$, the samples $\rS(R) \subseteq \rS_2$ are i.i.d. from the conditional distribution $\Dist(\cdot \mid E_{S_1,i})$. It follows from Corollary~\ref{cor:vc} that, conditioned on $\rInto=R$, it holds with probability at least $1-\delta/57$ over $\rS(R)$ that every $h \in \Hyp$ with $h(x)=c(x)$ for all $(x,c(x)) \in \rS(R)$ have $\Loss_{\Dist(\cdot \mid E_{S_1,i})}(h) \leq 1/62700$. Since $\rS(R) \subseteq \rS_2$, the same holds for all $h \in \Hyp$ with $h(x)=c(x)$ for every $(x,c(x)) \in \rS_2 = \rS(L_i)$.

Since this holds for all $S_1$ and $R$ with $|R| \geq 10^{9} (d + \ln(57/\delta))$, we conclude that $\Pr_{\rS \sim \Dist^m_c}[\rS \in \sSet_i''] \leq \delta/57$.
\end{proof}

\section{Conclusion}
\label{sec:conclusion}
In this work, we have shown that the classic bagging heuristic is an optimal PAC learner in the realizable setting if we sample just a logarithmic number of bootstrap samples.

As mentioned in Section~\ref{sec:intro}, we can also combine our proof with the recent work by~\cite{Larsen2022OptimalWT} to obtain a simpler optimal weak to strong learner. Let us comment further on this here. Said briefly, a $\gamma$-weak learner, is a learning algorithm that from a large enough constant number of samples from an unknown data distribution $\Dist$, with constant probability outputs a hypothesis $h$ with $\Loss_\Dist(h) \leq 1/2-\gamma$, i.e. it has an advantage of $\gamma$ over random guessing. On the other hand, an $(\eps,\delta)$-strong learner, is a learning algorithm that given enough samples $m(\eps,\delta)$, with probability $1-\delta$ outputs a hypothesis $h$ with $\Loss_\Dist(h) \leq \eps$~\cite{kearns1988learning,kearns1994cryptographic}. In the recent work~\cite{Larsen2022OptimalWT}, the authors showed that if we use the \textit{Sub-Sample} procedure from~\cite{hanneke2016optimal} (see Algorithm~\ref{alg:hanneke}), but instead of empirical risk minimization, run a version of AdaBoost called AdaBoost$^*_\nu$ by~\cite{ratsch2005efficient} on the sub-samples, then this combination gives a weak to strong learner with an optimal sample complexity of $O(d/(\eps \gamma^2) + \lg(1/\delta)/\eps)$. Here $d$ is the VC-dimension of the hypothesis set from which the weak learner outputs. Examining their proof, we can directly substitute Hanneke's \textit{Sub-Sample} by bagging, as our new proof boils down to applying Hanneke's reasoning on the buckets $\Buc_i$. We thus get an optimal weak to strong learner from the combination of the two classic concepts of bagging  and boosting.

Let us also make another comparison to Hanneke's algorithm. In his work, the number of sub-samples is independent of $\delta$, whereas our result for bagging requires $\Omega(\ln(m/\delta))$ sub-samples. Indeed, as bagging is a randomized algorithm, there will be some non-zero contribution to the failure probability due to the number of sub-samples (with some non-zero probability, all sub-samples are the same). We find it an interesting open problem whether the analysis can be tightened to yield a better dependency on $\delta$ in the number of sub-samples needed. We also mention that $O(m)$ sub-samples suffice for an optimal dependency on $\delta$ for any $0< \delta < 1$. This is because $O(m)$ sub-samples suffice for any $\delta \geq \exp(-O(m))$, and for even smaller $\delta$, we have that  $\ln(1/\delta)/m > 1$.

Finally, let us comment on the unspecified constant $a > 0$ in Theorem~\ref{thm:mainintro}. In our proof, we did not attempt to minimize $a$ and indeed it is rather ridiculous. With some care, one could certainly shave several orders of magnitude, but at the end of it, we still rely on Hanneke's proof which also does not have small constants. We believe that bagging actually provides quite good constants, but this would require a different proof strategy. At least our reduction to understanding $\Loss^{1/3}_\Dist(g_S)$ is very tight and incurs almost no loss in the constant $a$. A good starting point for improvements would be to find an alternative proof that $\Loss^{1/3}_\Dist(g_\rS)$ is small with high probability over $\rS$.

\bibliographystyle{abbrv}
\bibliography{refs}

\appendix

\section{Deferred Proofs}
Here we given the omitted proofs of the claims only needing standard arguments.

\begin{proof}[Proof of Lemma~\ref{lem:sizesample}]
  We need to show that $1/6 \geq \Pdist(0) = \Pr_{\rInd \sim [m]^n}[|\Distinct(\rInd)| \notin  [0.01m, 0.9m]]$. Consider first $\Pr_{\rInd \sim [m]^n}[|\Distinct(\rInd)| < 0.01m]$. If $|\Distinct(\rInd)| < 0.01m$, then there must be a set $Q \subseteq [m]$ of cardinality $0.01m$ with $\Distinct(I) \subseteq Q$. For a fixed $Q$, this happens with probability precisely $(|Q|/m)^n \leq (0.01)^{0.02 m}$. A union bound over all $\binom{m}{0.01m} \leq (100 e)^{0.01m}$ choices for $Q$ implies $\Pr_{\rInd \sim [m]^n}[|\Distinct(\rInd)| < 0.01m] \leq (e/100)^{0.01m}$. For $m \geq 100$, this is at most $1/30$. Consider next $\Pr_{\rInd \sim [m]^n}[|\Distinct(\rInd)| \geq 0.9m]$. This probability is only non-zero for $n \geq 0.9m$ so we assume this from hereon. If $|\Distinct(\rInd)| \geq 0.9m$, then there must be a set $Q$ of cardinality $0.9m$ such that $Q \subseteq \Distinct(\rInd)$. For a fixed such $Q$, any $I$ with $Q \subseteq \Distinct(I)$ can be uniquely described by first specifying the first occurrence of each $i \in Q$ as an index into $I$ and then specifying the remaining indices in $I$ one at a time. There are thus no more than $\binom{n}{0.9m} (0.9m)! m^{n-0.9m}$ such $I$. Hence $\Pr_{\rInd \sim [m]^n}[Q \subseteq \Distinct(I)] \leq \binom{n}{0.9m} (0.9m)! m^{-0.9m} \leq \binom{m}{0.9m} (0.9m)! m^{-0.9m} = m!/((0.1m)! m^{0.9m})$. Using Stirling's approximation, this is no more than $\sqrt{10 e} \cdot m^me^{-m}/((0.1m)^{0.1m} e^{-0.1m} m^{0.9m}) \leq 6 \cdot 10^{0.1m} e^{-0.9m} \leq 6 \cdot  e^{0.3m} e^{-0.9m} = 6 \cdot e^{-0.6m}$. A union bound over all $\binom{m}{0.9m} = \binom{m}{0.1m} \leq (10 e)^{0.1 m} \leq e^{0.4 m}$ choices for $Q$ implies $\Pr_{\rInd \sim [m]^n}[|\Distinct(\rInd)| \geq 0.9m] \leq 6 \cdot e^{-0.2 m}$. For $m \geq 100$, this is much less than $1/30$. We finally conclude $\Pdist(0) \leq 1/30 + 1/30 < 1/6$ as claimed.
\end{proof}

\begin{proof}[Proof of Corollary~\ref{cor:vc}]
  We see that for $m \geq 10^{9}(d + \ln(1/\delta))$, we have both
  \begin{eqnarray*}
    \frac{2 (d \lg_2(2em/d) + 1)}{m} &\leq& \frac{2 d \lg_2(4e10^9 d/d)}{10^9 d} \\
                               &=& \frac{2 \lg_2(4e10^9)}{10^9} \\
    &\leq& 1/125400
  \end{eqnarray*}
  and
  \begin{eqnarray*}
    \frac{2 \lg_2(1/\delta)}{m} &\leq& \frac{2 \lg_2(1/\delta)}{10^9\ln(1/\delta)} \\
                                &\leq& \frac{2 \lg_2(e)}{10^9} \\
    &\leq& 1/125400.
  \end{eqnarray*}
Thus $(2/m)(d\lg_2(2em/d)+\lg_2(2/\delta)) = (2/m)(d\lg_2(em/d)+\lg_2(1/\delta) + 1)  \leq 2/125400 = 1/62700$.
\end{proof}

\end{document}